\def\Tr{\mathsf{T}}
\newcommand{\qingbiao}{\textcolor{black}}
\newcommand{\qingbiaoli}{\textcolor{black}}
\newtheorem{theorem}{Theorem}
\newtheorem{proposition}{Proposition}
\newtheorem{lemma}[theorem]{Lemma}
\begin{document}
%
\title{\LARGE \bf
Message-Aware Graph Attention Networks for Large-Scale Multi-Robot Path Planning
}
%
%

\author{Qingbiao Li$^{1, *}$, Weizhe Lin$^{2, *}$, Zhe Liu$^{1}$, Amanda Prorok$^{1}$,~\IEEEmembership{Member,~IEEE}
\thanks{\qingbiaoli{This paper was recommended for publication by Editor-in-Chief Allison Okamura upon evaluation of the Associate Editor and Reviewers' comments. We gratefully acknowledge the support of ARL grant DCIST CRA W911NF-17-2-0181, and a gift through Amazon.com Inc. Z. Liu was supported by the Engineering and Physical Sciences Research Council (grant EP/S015493/1). A. Prorok was supported by the EPSRC (grant EP/S015493/1) and ERC Project 949940 (gAIa).  \emph{Corresponding author: Qingbiao Li.}}}
\thanks{*Qingbiao Li and Weizhe Lin contributed equally to this work. We also thank Fernando Gama, Binyu Wang and Alex Raymond for their helpful discussions.}
\thanks{$^{1}$Qingbiao Li, Zhe Liu and Amanda Prorok are with Department of Computer Science and Technology, University of Cambridge. (e-mail:{\tt\small \{ql295,zl457,asp45\}@cam.ac.uk}). } %
\thanks{$^{2}$Weizhe Lin is with Department of Engineering, University of Cambridge. (e-mail:
        {\tt\small wl356@cam.ac.uk}).}%
}

\maketitle

\begin{abstract}
The domains of transport and logistics are increasingly relying on autonomous mobile robots for the handling and distribution of passengers or resources. At large system scales, finding \textit{decentralized} path planning and coordination solutions is key to efficient system performance.
Recently, Graph Neural Networks (GNNs) have become popular due to their ability to learn communication policies in decentralized multi-agent systems. Yet, vanilla GNNs rely on simplistic message aggregation mechanisms that prevent agents from prioritizing important information.
To tackle this challenge, in this paper, we extend our previous work that utilizes GNNs in multi-agent path planning by incorporating a novel mechanism to allow for \textit{message-dependent attention}. Our Message-Aware Graph Attention neTwork (MAGAT) is based on a key-query-like mechanism that determines the relative importance of features in the messages received from various neighboring robots.
We show that MAGAT is able to achieve a performance close to that of a coupled centralized expert algorithm. Further, ablation studies and comparisons to several benchmark models show that our attention mechanism is very effective across different robot densities and performs stably in different constraints in communication bandwidth.
Experiments demonstrate that our model is able to generalize well in previously unseen problem instances, and that it achieves a 47\% improvement over the benchmark success rate, even in very large-scale instances that are $\times$100 larger than the training instances.

\end{abstract}



\section{Introduction}


\IEEEPARstart{R}{emarkable} progress has been achieved for Multi-Robot Path Planing~\cite{vandenberg_Reciprocal_2008, barer2014suboptimal,sartoretti_PRIMAL_2019, Qingbiao2019}---a problem that considers the generation of collision-free paths leading robots from their start positions to designated goal positions. 
In recent years, solutions to this problem have become increasingly important for item retrieval in warehouses~\cite{enright:2011} and mobility-on-demand services~\cite{prorok2017privacy}.

However, computing a solution that can balance between optimality and real-world efficiency
remains a challenge. 
Current approaches can be mainly classified into \textit{centralized} and \textit{decentralized}. Centralized methods require 
central units to gather information from all robots and organize the optimal path for each of them, consuming large computational resources.
As the system scales, decentralized approaches become increasingly popular, where each robot estimates or communicates others' future trajectories via broadcasting or distance-based communication.
Unfortunately, if the communication happens \qingbiao{concurrently} and equivalently among many neighboring robots, it is likely to cause redundant communication, burden the computational capacity and adversely affect overall team performance.
Besides, robust and continuous communication cannot yet be guaranteed due to limited bandwidth, large data volumes, and interference from the surroundings. Additionally, under a fully decentralized framework without any priority of planning, it is very hard to ensure the convergence of the negotiation process~\cite{van2005prioritized}. 
These limitations ultimately affect the optimality of solutions found and the overall resilience of the team to disruptions. Hence, new trends of research focus on \textit{\textbf{communication-aware path planning approaches}} by explicitly considering communication efficiency during path generation and path optimization~\cite{fowler2018}, addressing to whom the information is communicated, and at what time~\cite{best2018}.

\qingbiao{\textbf{Contributions.}}
\qingbiao{In this work, we propose to use a Message Aware Graph Attention neTwork (MAGAT) to extend our previous decentralized framework~\cite{Qingbiao2019}.
Our contributions are summarized as follows:}

\begin{itemize}[leftmargin=*]
    \item 
    \qingbiao{We combine a Graph Neural Network (GNN) with a key-query-like attention mechanism to improve the effectiveness of inter-robot communication.
    We demonstrate the suitability of applying our model on \emph{dynamic communication graphs} by proving its permutation equivariance and time invariance property.}
    \item \qingbiao{We investigate the impact of reduced communication bandwidth by reducing the size of the shared features, and then deploy a skip-connection to preserve self-information and maintain model performance.}
    \item \qingbiao{We demonstrate the generalizability of our model by training the model on small problem instances and testing it on increasing robot density, varying map size, and much larger problem instances (up to~$\boldsymbol{\times 100}$ the number of robots). 
    Our proposed model is shown to be more efficient in learning general knowledge of path planning as it achieves better generalization performance than the baseline systems under various scenarios.}
\end{itemize}

\section{Related Work} \label{sec:literature}

\textbf{Learning-based methods}
have been actively investigated in recent years, and have demonstrated their strengths in designing robot control policies for an increasing number of tasks~\cite{tobin_Domain_2017}. 
These methods have shown their capabilities to offload the online computational burden into an offline learning procedure, which allows agents to act independently based on the learned knowledge, and thus to work in a decentralized manner~\cite{sartoretti_PRIMAL_2019, wang2020mobile}.
Sartoretti et al.~\cite{sartoretti_PRIMAL_2019} have proposed a hybrid learning-based method called PRIMAL for multi-agent path-finding that integrated imitation learning (via an expert algorithm) and multi-agent reinforcement learning. Their approaches did not consider inter-robot communication, and thus,  did not exploit the full potentials of the decentralized system.


\qingbiao{Communication between robots is essential, especially for decentralized approaches. Addressing the problems of what information should be sent to whom and when is crucial to solving the task effectively. Early works have explored discrete communication, through signal binarization~\cite{foerster2016learning} or variable-length coding scheme~\cite{freed2020}, and compositional language using abstract discrete symbols for categorical communication emissions in a grounded communication environment~\cite{mordatch2017emergence}.
Our previous work~\cite{Qingbiao2019} developed a decentralized framework for multi-agent coordination in partially observed environments based on Graph Neural Networks (GNNs) to achieve explicit inter-robot communication. 
However, we did not investigate whether it is actually beneficial to weigh features received from neighboring robots before making decisions.}

\qingbiao{\textbf{Attention mechanisms}. GNNs have attracted increasing attention in various fields, including the multi-robot domain (flocking and formation control~\cite{Tolstaya19-Flocking}, 
and multi-robot path planning~\cite{Qingbiao2019}). 
However, how to effectively process large-scale graphs with noisy and redundant information is still under active investigation.
A potential approach is to introduce attention mechanisms to actively measure the relative importance of node features.
Hence, the network can be trained to focus on task-relevant parts of the graph~\cite{velivckovic2017graph}. 
Learning attention over static graphs has been proved to be efficient. 
Besides, Liu et al. \cite{liu2020when2com} developed a learning-based communication model that constructed the communication group on a static graph to address what to transmit and which agent to communicate for collaborative perception.
However, its permutation equivariance, time invariance and its practical effectiveness in dynamic multi-agent communication graphs have not yet been verified.}

\section{Problem Formulation}
 \label{sec:problem_formulatin}

\textbf{Problem.} We set up a 2D grid world $\mathcal{W}$, which contains a set of static obstacles $\mathcal{C}\subset\mathcal{W}$. Let $\ccalV = \{v_{1},\ldots,v_{N}\}$ be the set of $N$ robots with independent pairs of the start and goal positions.
We formulate the multi-agent path planning problem as a sequential decision-making problem, where, at every time instant $t$, each robot $i$ takes an action $\tbu_{t}^{i}$ towards its goal in a collision-free path, given the following assumptions. 

\textbf{Assumptions.} 1) There is no global positioning of the robots. The map perceived by robot $i$ at time $t$ is denoted by $\bbM_{t}^{i} \in \reals^{W_{\mathrm{FOV}} \times H_{\mathrm{FOV}}}$, where $W_{\mathrm{FOV}}$ and $H_{\mathrm{FOV}}$ are the width and height (in 2D plane) of the rectangular Field of View (FOV). 
2) When the goal of the robot is outside the FOV, this information is clipped to the FOV boundary in a local reference frame (see Fig.~\ref{fig:flowchart_training}), resulting in only the awareness of the direction of its goal.
3) Compared with the time for the robot's movement, the communication between the neighboring robots happens instantly without delay, and is not blocked by any static obstacles $\mathcal{C}$.
4) We further limit the communication such that robots can only send out their features at a specific bandwidth, but \textbf{cannot} access the ownership of the feature received.

\textbf{Communications.} Each robot can communicate, or share information with only adjacent robots.
\qingbiao{We formalize this communication with a dynamic distance-based communication network. We can describe this network at time $t$ by means of a graph $\ccalG_{t} = (\ccalV, \ccalE_{t}, \ccalW_{t})$ where $\ccalV$ is the set of robots, $\ccalE_{t} \subseteq \ccalV \times \ccalV$ is the set of edges and $\ccalW_{t}: \ccalE_{t} \to \reals$ is a function that assigns weights to the edges.
The graph is distance-based because robots $v_{i}$ and $v_{j}$ can communicate with each other at time $t$ if and only if $(v_{i},v_{j}) \in \ccalE_{t}$. 
For instance, two robots $v_{i}$ and $v_{j}$, with positions $\bbp_{i}, \bbp_{j} \in \mbR^{2}$ respectively, can communicate with each other if $\|\bbp_i-\bbp_j\|\leq r_{\mathrm{COMM}}$ for a given fixed communication radius $r_{\mathrm{COMM}} > 0$. This allows us to define an adjacency matrix $\bbS_t \in \mbR^{N \times N}$ representing the distance-based communication graph, where $[\bbS_{t}]_{ij} = s_{t}^{ij} = 0$ if $(v_{i},v_{j}) \notin \ccalE_{t}$ and $1$ otherwise.
The corresponding edge weight $\ccalW_{t}(v_{i},v_{j}) = w_{t}^{ij} = [\bbS_{t}]_{ij} [E]_{ij}  \in [0, 1]$, where $[\bbS_{t}]_{ij}$ represents the graph connectivity and $[E]_{ij}$ (will be introduced later in Eq.~\ref{eqn:GAT_attention}) indicates relative importance (attention) of the information contained in the messages received from the neighboring robots.}



\section{Preliminaries}
\label{sec:grpah_operation}
In this section, we briefly review the concepts of graph operations and the GNN \cite{Qingbiao2019}.

We assume that the features extracted by each robot at time $t$ are of size $F$, and then we have the observation matrix $\bbX_{t} \in \reals^{N \times F}$ where each row collects these $F$ observations at each robot $\tbx_{t}^{i} \in \reals^{F}$, $i=1,\ldots,N$.
\begin{equation} \label{eqn:featureMatrix}
    \bbX_{t} 
    = \begin{bmatrix}
        (\tbx_{t}^{1})^{\Tr} \\
        \vdots \\
        (\tbx_{t}^{N})^{\Tr}
      \end{bmatrix}
    = \begin{bmatrix}
        \bbx_{t}^{1} & \cdots & \bbx_{t}^{F}
      \end{bmatrix}~.
\end{equation}
%

\textbf{Graph Shift Operation}.
The convolution on graph features is defined by a linear combination of neighboring node feature values $\bbX_{t}$ from the graph $\ccalG_{t}$.
Hence, the value at node $i$ for feature $f$ after operation $\bbS_{t} \bbX_{t} \in \reals^{N \times F}$ becomes:
\begin{equation} \label{eqn:graphShift}
    [\bbS_{t} \bbX_{t}]_{if} 
        = \sum_{j = 1}^{N} [\bbS_{t}]_{ij} [\bbX_{t}]_{jf}
        = \sum_{j : v_{j} \in \ccalN_{i}}
            s_{t}^{ij} x_{t}^{jf}~.
\end{equation}

Here, we call $\bbS_{t}$ the \emph{Graph Shift Operator} (GSO)~\cite{Gama19-Architectures} and set the adjacency matrix as the GSO $\bbS_{t}$ in this paper to describe the topology of the distance-based dynamic communication graph. Note that $\bbS_{t}$ can also be the Laplacian matrix~\cite{Gama19-Architectures}. 
The set of nodes $v_{j}$ that are neighbors of $v_{i}$ is formulated by $\ccalN_{i} = \{v_{j} \in \ccalV : (v_{j},v_{i}) \in \ccalE_{t}\}$.
Also, the second equality in Eq.~\ref{eqn:graphShift} holds because $s_{t}^{ij} = 0$ for all $j \notin \ccalN_{i}$.

%
\textbf{Graph Convolution}. Therefore, we can define a \emph{graph convolution} \cite{Gama19-Architectures, Qingbiao2019} as a linear combination of shifted versions of the signal from the graph: 
\begin{equation} \label{eqn:graphConvolution}
    \ccalA(\bbX_{t}; \bbS_{t}) = \sum_{k=0}^{K-1} \bbS_{t}^{k} \bbX_{t} \bbA_{k}~,
\end{equation}
where $\{\bbA_{k}\}$ is a set of $F \times G$ matrices representing the filter coefficients combining different observations, \qingbiao{where $F$ and $G$ represent the dimension of input and output layers of the graph convolution}. Note that, $\bbS_{t}^{k} \bbX_{t} = \bbS_{t}(\bbS_{t}^{k-1}\bbX_{t})$ is computed by means of $k$ communication exchanges with $1$-hop neighbors.

An GNN module is a cascade of $L$ layers of graph convolutions in  Eq.~\ref{eqn:graphConvolution} followed by a point-wise non-linearity $\sigma: \reals \to \reals$, which is also called an activation function.
\begin{equation} \label{eqn:convGNN}
    \bbX_{\ell} = \sigma \big[ \ccalA_{\ell}(\bbX_{\ell-1};\bbS) \big] \quad \text{for} \quad \ell = 1,\ldots,L~,
\end{equation}
where $\sigma$ is applied to each element of the matrix $\ccalA_{\ell}(\bbX_{\ell-1}; \bbS)$. The output feature $\bbX_{\ell-1} \in \reals^{N \times F_{\ell-1}}$ at the previous layer $\ell-1$ is fed into
the current layer $\ell$ as input to compute the fused information $F_{\ell}$. Note that, the input to the first layer is $\bbX_{0} = \tbx_{t}^{i}$ (in Eq.~\ref{eqn:featureMatrix}) and $F_{0} = F$, the dimensions of the features
from the previous layer by each robot $i$ at time $t$. The GSO $\bbS$ in Eq.~\ref{eqn:convGNN} is the one corresponding to the communication network at time $t$, $\bbS = \bbS_{t}$. 
The output of the last layer is the fused information
$G=F_{L}$
via multi-hop communication and multi-layer convolutions, which will be used to predict action $\bbU_{t}$ at time $t$.

\section{Message-Aware Graph Attention neTwork}\label{subsec:MAGAT}
Inspired by the Graph Attention neTworks (GATs) used on static knowledge graphs~\cite{velivckovic2017graph},
we incorporate a key-query-like attention mechanism \qingbiao{\cite{vaswani2017attention}} such that the weights on edges between nodes $\ccalW_{t}(v_{i},v_{j})\in [0, 1]$ are determined by the relative importance of node features, which allows each robot to aggregate message features received from neighbors with a selective focus.
Formally, inspired by~\cite{isufi2020edgenets}, 
\qingbiao{we define a generic GNN model as follows (affording flexibility to use different weighing mechanism between neighboring nodes)}:
\begin{equation} \label{eqn:graphAttentionConv}
    \ccalA(\bbX_{t}; \bbS_{t}) = \sum_{k=0}^{K-1} (\bbE \odot \bbS_{t})^{k} \bbX_{t} \bbA_{k}~,
\end{equation}
where $\bbE$ is an attention matrix of the same dimensions as $\bbS$ and
``$\odot$'' refers to an element-wise product.
The values of $\bbE$ are computed as follows:
\begin{equation} \label{eqn:GAT_attention}
    [E]_{ij} = \frac{\exp{(LeakyReLU(e_{ij})})}{\sum_{k\in \ccalN_{i}}\exp{(LeakyReLU(e_{ik})})}~,
\end{equation}
where $\ccalN_{i}$ is the collection of all the neighboring nodes of node $i$,
and $e_{i j}$ is obtained by:
\begin{equation} \label{eqn:MAGAT_keyquery}
     e_{i j} = \tbx_{t}^{i} \boldsymbol{W}  (\tbx_{t}^{j})^{\Tr}~,
\end{equation}
\qingbiao{where $\boldsymbol{W}$ is a weight matrix serving as a key-query-like attention~\cite{vaswani2017attention}}. 
A softmax function (as in Eq. \ref{eqn:GAT_attention}) is applied to the attention so that the edge weights are constrained within $[0, 1]$.
Recall that $\tbx_{t}$ is the input feature extracted by previous layers.

Similar to GNN, MAGAT generates output features based on a pointwise nonlinearity $\sigma$, but potentially the output can be a concatenation of the outputs of $P$ attention heads:
\begin{equation} \label{eqn:convGAT}
    \bbX_{\ell} = \big\Vert_{p=1}^P \big(  \sigma \big[ \ccalA_{\ell}^{p}(\bbX_{\ell-1};\bbS) \big] \big ) \quad \text{for} \quad \ell = 1,\ldots,L~,
\end{equation}
where $P$ is the number of independent heads in the layer \qingbiao{and $||$ represents concatenation}.
Each trainable weight matrix (e.g.  $\boldsymbol{W}$) in each attention head $p$ and each layer $l$ is independent.

We introduce the original GAT~\cite{velivckovic2017graph} as a baseline in Sec.~\ref{sec:results} by directly replacing our core attention mechanism (Eq. \ref{eqn:MAGAT_keyquery}) with the following original GAT attention mechanism while keeping other parts of our framework unchanged:
\begin{equation} \label{eqn:GAT_lintransform}
     e_{i j}
     =
     ( (\tbx_{t}^{i})^{\Tr}\bbA_k  ||  (\tbx_{t}^{j})^{\Tr}\bbA_k )\bbH,
\end{equation}
where $\bbH$ is a $2G_l\times1$ matrix \qingbiao{and $||$ represents concatenation}.
\qingbiao{Note that in the original GAT,
the trainable linear-transformation matrix $\bbA_k$ serves both in the attention weight computation (Eq. \ref{eqn:GAT_lintransform}) and the feature aggregation (Eq. \ref{eqn:graphAttentionConv}). However, we posit that computing attention weights by Eq.~\ref{eqn:MAGAT_keyquery} on the raw features extracted by the CNN instead of the linear-transformed features, free $\bbA_k$ from serving two purposes simultaneously, and therefore improves the model performance (as shown in Sec.~\ref{sec:results}).}


\begin{figure*}[ht]
    \centering

    \includegraphics[width=\textwidth]{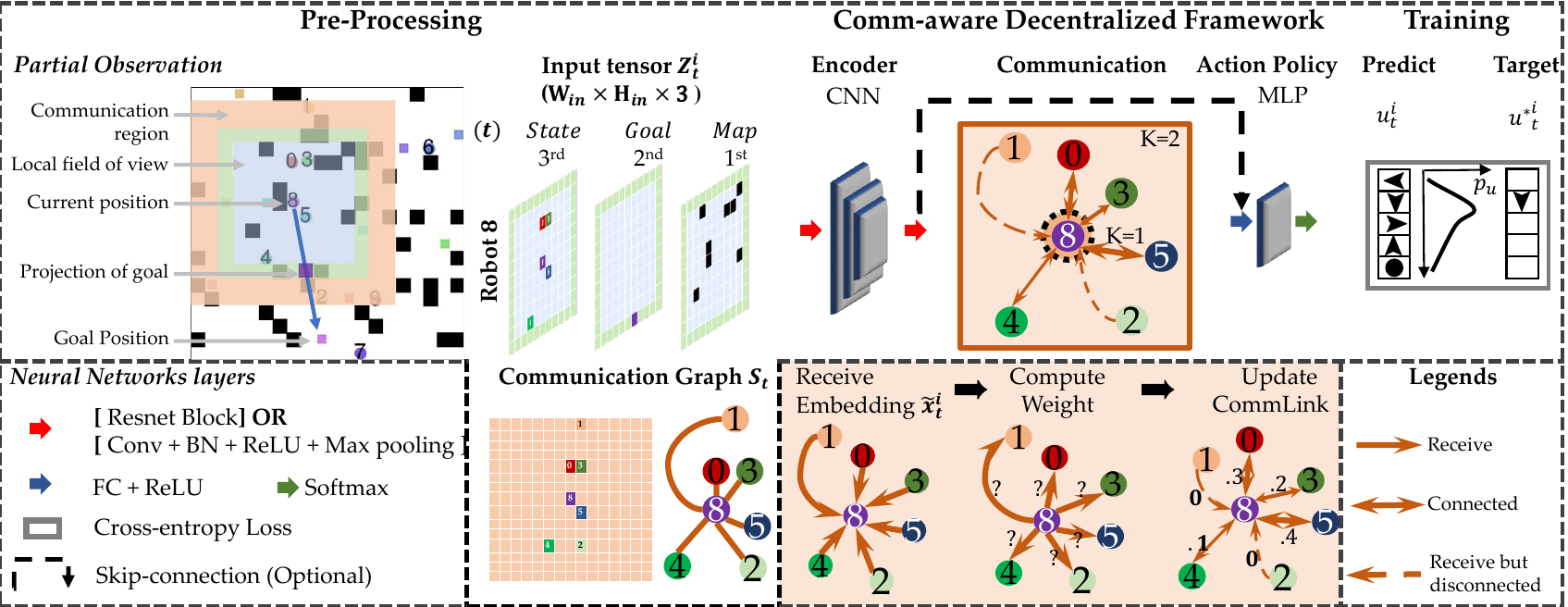}
    \caption{\normalfont
    Our proposed decentralized framework.
    (i) illustrates how we process the partial observations of each robot into input tensor $\bbZ_t^i$, and how we construct the dynamic communication network.
    (ii) demonstrates the processing pipeline consisting of a feature extractor, a graph convolution module, and an Multi-layer \qingbiao{Perceptron} (MLP).
    The optional skip connection represents the bottleneck structure discussed in Sec. \ref{sec:framework}.
    (iii) visualizes how our model gathers features, computes attention weights \qingbiao{by a key-query-like attention mechanism (sandy brown)}, and selectively aggregates useful features.
    }
    \label{fig:flowchart_training}
    \vspace{-2em}
\end{figure*}
\setlength{\belowcaptionskip}{1em}
\subsection{Properties of MAGAT}
\label{sec:invariance_property}
Given our task formulation, the topology of the communication graph $\ccalG_{t}$ changes with time. Other robots can enter and leave the communication range of the robot at any time, leading to a frequent change of the graph topology.
Therefore, it is necessary to discuss whether our trained MAGAT performs graph convolutions consistently regardless of agent permutation and time shift.
\subsubsection{Permutation Equivariance}
MAGAT must satisfy permutation equivariance, which ensures that the trained MAGAT is resistant to the change of robot orders and always gives the same convolution results regardless of how we swap the order indices when constructing the dynamic graph.

We first define a permutation $\pi$ as swapping the indices of robots.
The permutation results in a swapped order of features:
\begin{equation} \label{eqn:feature_permutation}
    \pi(\bbX_{t})
    = \begin{bmatrix}
        (\tbx_{t}^{\pi^{-1}(1)})^{\Tr} \\
        \vdots \\
        (\tbx_{t}^{\pi^{-1}(N)})^{\Tr}
      \end{bmatrix}~.
\end{equation}
A permutation matrix $\bbP_{\pi}\in \{0, 1\}^{N\times N}$ is thus defined to swap graph features directly:
\begin{equation} \label{eqn:permutation_matrix}
    \begin{bmatrix}
    \bbP_{\pi}\bbX_{t}
    \end{bmatrix}_{ij}=
    \begin{bmatrix}\bbX_{t}\end{bmatrix}_{\pi^{-1}(i)j}~.
\end{equation}

\begin{lemma}
Given a permutation $\pi$, its corresponding permutation matrix $P_{\pi}$, and the convolution operation $\ccalA_{G}$ of GNN defined in Eq. \ref{eqn:graphConvolution},
the following equation can be derived from~\cite{Gama19-Stability}:
\begin{equation} \label{eqn:criterion_1}
    \bbP_{\pi} \ccalA_{G}(\bbX_{t}; \bbS_{t}) = \ccalA_{G}(\bbP_{\pi} \bbX_{t}; \bbP_{\pi} \bbS_{t})~.
\end{equation}
\end{lemma}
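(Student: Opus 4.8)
The plan is to expand the right-hand side of Eq.~\ref{eqn:criterion_1} using the definition of the graph convolution in Eq.~\ref{eqn:graphConvolution}, push the permutation matrix out of the sum, and reduce everything to two elementary facts about permutation matrices. First I would clarify the permutation convention: the object $\bbP_{\pi}\bbS_{t}$ written on the right-hand side must be understood as the full relabeling $\bbP_{\pi}\bbS_{t}\bbP_{\pi}^{\Tr}$. Reindexing the robots permutes both the senders (rows) and the receivers (columns) of the adjacency matrix, so the shift operator transforms by a similarity transformation; a bare one-sided multiplication would not be consistent with the row permutation $\bbP_{\pi}\bbX_{t}$ of the features in Eq.~\ref{eqn:feature_permutation} and would not yield equivariance.

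The two ingredients I would isolate are: (a) permutation matrices are orthogonal, $\bbP_{\pi}^{\Tr}\bbP_{\pi}=\bbI$; and (b) conjugation commutes with taking powers, $(\bbP_{\pi}\bbS_{t}\bbP_{\pi}^{\Tr})^{k}=\bbP_{\pi}\bbS_{t}^{k}\bbP_{\pi}^{\Tr}$. I would prove (b) by a short induction on $k$: the base case $k=0$ is the identity, and the inductive step collapses an interior factor $\bbP_{\pi}^{\Tr}\bbP_{\pi}=\bbI$ using (a). Intuitively, (b) is just the statement that the $k$-hop shift $\bbS_{t}^{k}$ relabels in exactly the same way as the $1$-hop shift $\bbS_{t}$, which is the structural reason the multi-hop communication of Eq.~\ref{eqn:graphConvolution} respects relabeling.

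With these in hand the computation is mechanical. Substituting the permuted arguments gives $\ccalA_{G}(\bbP_{\pi}\bbX_{t};\bbP_{\pi}\bbS_{t}\bbP_{\pi}^{\Tr})=\sum_{k=0}^{K-1}(\bbP_{\pi}\bbS_{t}\bbP_{\pi}^{\Tr})^{k}(\bbP_{\pi}\bbX_{t})\bbA_{k}$; applying (b) turns each summand into $\bbP_{\pi}\bbS_{t}^{k}\bbP_{\pi}^{\Tr}\bbP_{\pi}\bbX_{t}\bbA_{k}$, and cancelling the adjacent $\bbP_{\pi}^{\Tr}\bbP_{\pi}=\bbI$ collapses it to $\bbP_{\pi}\bbS_{t}^{k}\bbX_{t}\bbA_{k}$. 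Since $\bbP_{\pi}$ and the fixed filter coefficients $\bbA_{k}$ do not depend on $k$, I would factor $\bbP_{\pi}$ out of the sum to obtain $\bbP_{\pi}\sum_{k=0}^{K-1}\bbS_{t}^{k}\bbX_{t}\bbA_{k}=\bbP_{\pi}\ccalA_{G}(\bbX_{t};\bbS_{t})$, which is exactly the claimed identity.

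I do not expect a genuine obstacle, since the result is standard and already attributed to~\cite{Gama19-Stability}; the only point that requires care is the bookkeeping of the permutation convention, namely making explicit that the shift operator transforms by conjugation $\bbP_{\pi}(\cdot)\bbP_{\pi}^{\Tr}$ rather than by one-sided multiplication, so that the interior permutation factors cancel cleanly and the equivariance goes through.
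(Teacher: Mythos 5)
Your proof is correct, but it takes a different route from the paper for the simple reason that the paper contains no proof of this lemma at all: the statement is imported wholesale from the cited reference \cite{Gama19-Stability}, and the only related argument the paper supplies is the downstream computation for Proposition~1 that invokes the lemma as a black box. What you have written is essentially a self-contained reconstruction of the argument in that reference, where equivariance of polynomial graph filters is obtained exactly as you do it: relabeling acts on the shift operator by conjugation, conjugation commutes with powers, $(\bbP_{\pi}\bbS_{t}\bbP_{\pi}^{\Tr})^{k}=\bbP_{\pi}\bbS_{t}^{k}\bbP_{\pi}^{\Tr}$, and then orthogonality $\bbP_{\pi}^{\Tr}\bbP_{\pi}=\bbI$ lets $\bbP_{\pi}$ factor out of the sum $\sum_{k=0}^{K-1}\bbS_{t}^{k}\bbX_{t}\bbA_{k}$. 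The one place where you add genuine value over the paper's presentation is the convention bookkeeping: taken literally, the one-sided product $\bbP_{\pi}\bbS_{t}$ appearing in the lemma makes the identity false in general (already the $k=1$ term gives $\bbP_{\pi}\bbS_{t}\bbP_{\pi}\bbX_{t}\bbA_{1}$, which differs from $\bbP_{\pi}\bbS_{t}\bbX_{t}\bbA_{1}$ unless $\bbS_{t}\bbP_{\pi}=\bbS_{t}$), so the statement is only meaningful under the reading $\bbP_{\pi}\bbS_{t}\mapsto\bbP_{\pi}\bbS_{t}\bbP_{\pi}^{\Tr}$ that you make explicit, matching the row-permutation definition of $\bbP_{\pi}$ the paper gives for features. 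The same clarification is in fact needed to make the paper's own chain of equalities in the proof of Proposition~1 rigorous, since factoring $\bbP_{\pi}$ out of the $k$-th power of $\bbP_{\pi}(\bbE\odot\bbS_{t})$ silently uses the cancellation of interior factors that only the conjugation convention provides. In short: the mathematics is the standard argument of the cited source, but you supply the elementary details the paper omits, and your treatment of the permutation convention repairs a notational imprecision that the paper leaves implicit.
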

\begin{proposition}[Permutation Equivariance of MAGAT]
For any permutation $\pi$,  its corresponding permutation matrix $P_{\pi}$ and the convolution operation $\ccalA_{F}$ of MAGAT defined in Eq. \ref{eqn:graphAttentionConv},
the following equation holds:
\begin{equation} \label{eqn:criterion_1}
    \bbP_{\pi} \ccalA_{F}(\bbX_{t}; \bbS_{t}) = \ccalA_{F}(\bbP_{\pi} \bbX_{t}; \bbP_{\pi} \bbS_{t})~.
\end{equation}
\end{proposition}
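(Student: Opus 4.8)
The plan is to reduce the claim to the permutation equivariance of the plain graph convolution (Lemma~1) by viewing MAGAT as an ordinary graph convolution driven by an \emph{attention-modulated} shift operator. Concretely, I would define the effective GSO $\hbS_{t} := \bbE \odot \bbS_{t}$, so that Eq.~\ref{eqn:graphAttentionConv} reads $\ccalA_{F}(\bbX_{t};\bbS_{t}) = \sum_{k=0}^{K-1} \hbS_{t}^{\,k}\bbX_{t}\bbA_{k} = \ccalA_{G}(\bbX_{t};\hbS_{t})$, i.e.\ MAGAT is structurally the GNN convolution of Eq.~\ref{eqn:graphConvolution} applied with base shift operator $\hbS_{t}$. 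Recalling that the permutation action on a shift operator is the relabeling $\bbS \mapsto \bbP_{\pi}\bbS\bbP_{\pi}^{\Tr}$ (which is what $\bbP_{\pi}\bbS_{t}$ abbreviates in the statement), the proposition then follows from Lemma~1 \emph{provided} the effective GSO is built equivariantly: the operator assembled from the permuted inputs $(\bbP_{\pi}\bbX_{t},\bbP_{\pi}\bbS_{t}\bbP_{\pi}^{\Tr})$ must equal $\bbP_{\pi}\hbS_{t}\bbP_{\pi}^{\Tr}$.

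Establishing this equivariance of $\hbS_{t}$ is the heart of the argument, and I would break it into three pieces. First, the raw key--query scores of Eq.~\ref{eqn:MAGAT_keyquery} depend only on the two node features and the shared matrix $\boldsymbol{W}$; since after permutation node $i$ carries feature $\tbx_{t}^{\pi^{-1}(i)}$ (Eq.~\ref{eqn:feature_permutation}), the permuted score for pair $(i,j)$ equals $e_{\pi^{-1}(i)\,\pi^{-1}(j)}$, and because LeakyReLU acts entrywise this relabeling survives the nonlinearity. Second, I must show the softmax of Eq.~\ref{eqn:GAT_attention} respects the relabeling, so that $\bbE' = \bbP_{\pi}\bbE\bbP_{\pi}^{\Tr}$; the key point is that permuting $\bbS_{t}$ also permutes the edge set, so the neighbourhood driving the permuted denominator is exactly the image under $\pi$ of the original neighbourhood, making numerator and denominator relabel consistently. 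Third, I would invoke the elementary identity $(\bbP_{\pi}\bbA\bbP_{\pi}^{\Tr})\odot(\bbP_{\pi}\bbB\bbP_{\pi}^{\Tr}) = \bbP_{\pi}(\bbA\odot\bbB)\bbP_{\pi}^{\Tr}$, valid because a Hadamard product acts coordinatewise while conjugation by a permutation merely relabels coordinates; applying it to $\bbE\odot\bbS_{t}$ yields the desired relabeling of $\hbS_{t}$.

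With the effective GSO shown to transform by relabeling, Lemma~1 applied to $\ccalA_{G}(\,\cdot\,;\hbS_{t})$ gives $\bbP_{\pi}\ccalA_{G}(\bbX_{t};\hbS_{t}) = \ccalA_{G}(\bbP_{\pi}\bbX_{t};\bbP_{\pi}\hbS_{t}\bbP_{\pi}^{\Tr})$, and rewriting the second argument via the equivariance of $\hbS_{t}$ turns the right-hand side into $\ccalA_{F}(\bbP_{\pi}\bbX_{t};\bbP_{\pi}\bbS_{t})$, which is exactly Eq.~\ref{eqn:criterion_1}.

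The main obstacle I anticipate is the softmax step: one must verify carefully that the neighbourhood-restricted normalization in Eq.~\ref{eqn:GAT_attention} commutes with the permutation. This hinges on the fact that permuting the base GSO permutes $\ccalN_{i}$ consistently, so the denominator for row $i$ in the permuted system ranges over the relabeled neighbourhood and equals the normalization constant of row $\pi^{-1}(i)$ in the original. The remaining pieces\,-\,the entrywise nonlinearity, the Hadamard/permutation commutation, and the final appeal to Lemma~1\,-\,are routine once this neighbourhood alignment is pinned down.
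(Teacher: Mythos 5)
Your proposal is correct and takes essentially the same route as the paper: both proofs treat the attention-modulated operator $\bbE \odot \bbS_{t}$ as an effective shift operator, verify that the attention matrix relabels consistently under $\pi$ (since $e_{ij}$ depends only on the node features), and then invoke Lemma~1 with $\bbE \odot \bbS_{t}$ in place of $\bbS_{t}$. If anything, your version is more careful than the paper's\,---\,you spell out the two-sided conjugation $\bbP_{\pi}\bbS_{t}\bbP_{\pi}^{\Tr}$, the softmax-neighbourhood alignment, and the Hadamard/permutation commutation identity, all of which the paper compresses into the single row-permutation identity of Eq.~\ref{eqn:permutation_equivariance_proof_1}.
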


\begin{proof}
Recall that Eq. \ref{eqn:MAGAT_keyquery} implies that attention $e_{ij}$ is only determined by the node features $\tbx_{t}^{i}$ and $\tbx_{t}^{j}$. 
Thus, using the permutation operation defined in Eq.~\ref{eqn:permutation_matrix}, we can permute the robot indices in the attention matrix as follows:
\begin{equation} \label{eqn:permutation_equivariance_proof_1}
    \begin{bmatrix}
    \bbP_{\pi}\bbE
    \end{bmatrix}_{ij}=
    \mathrm{softmax}(e_{\pi^{-1}(i)j})=
    \begin{bmatrix}
    \bbE
    \end{bmatrix}_{\pi^{-1}(i)j}~.
\end{equation}
This means that the swap of robots will only permute the attention matrix in a similar way with graph features. Then we can show that in our MAGAT:
\begin{equation} \label{eqn:permutation_equivariance_proof_2}
\begin{aligned}
    \ccalA_F(\bbP_{\pi}\bbX_{t}; \bbP_{\pi}\bbS_{t}) =& \sum_{k=0}^{K-1} (\bbP_{\pi}\bbE \odot \bbP_{\pi}\bbS_{t})^{k} \bbP_{\pi}\bbX_{t} \bbA_{k}\\
    =&\sum_{k=0}^{K-1} (\bbP_{\pi}(\bbE \odot \bbS_{t}))^{k} \bbP_{\pi}\bbX_{t} \bbA_{k}\\
    =&\bbP_{\pi}\sum_{k=0}^{K-1} (\bbE \odot \bbS_{t})^{k} \bbX_{t} \bbA_{k}~.
\end{aligned}
\end{equation}
The last step uses the permutation equivariance property of GNN from \textit{Lemma 1}, taking $\bbE \odot \bbS_{t}$ as a whole to replace the $\bbS_{t}$ in GNN.
\end{proof}

\subsubsection{Time Invariance} MAGAT must satisfy the time invariance criterion, in order to generate consistent output when the same situation appears again in a different step of the simulation.
\begin{proposition}[Time Invariance of MAGAT]
Given $t_1\not=t_2$, $\bbX_{t_1}=\bbX_{t_2}$ and $\bbS_{t_1}=\bbS_{t_2}$, and the convolution operation $\ccalA_{F}$ of MAGAT in Eq.~\ref{eqn:graphAttentionConv},
the following equation holds:
\begin{equation}
\label{eqn:criterion_2}
    \ccalA_F(\bbX_{t_1}; \bbS_{t_1}) = \ccalA_F(\bbX_{t_2}; \bbS_{t_2})~.
\end{equation}
\end{proposition}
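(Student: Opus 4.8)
The plan is to show that the convolution operator $\ccalA_F$ carries no explicit dependence on the time index $t$: it depends on $t$ only through its two arguments $\bbX_t$ and $\bbS_t$, so that identical inputs must yield identical outputs. First I would enumerate every quantity appearing in Eq.~\ref{eqn:graphAttentionConv} and classify each as either a fixed (trained) parameter or a time-varying input. The filter coefficients $\{\bbA_k\}$, the key-query weight matrix $\boldsymbol{W}$, the filter order $K$, and the LeakyReLU and softmax nonlinearities are all frozen after training and hence constant across time; the only time-indexed objects entering the expression are $\bbX_t$ and $\bbS_t$.

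The key step is to argue that the attention matrix $\bbE$ is itself a deterministic function of the pair $(\bbX_t, \bbS_t)$ alone, with no hidden state carried between steps. From Eq.~\ref{eqn:MAGAT_keyquery}, each raw score $e_{ij} = \tbx_t^i \boldsymbol{W} (\tbx_t^j)^{\Tr}$ is a function only of the $i$-th and $j$-th rows of $\bbX_t$ together with the fixed $\boldsymbol{W}$; from Eq.~\ref{eqn:GAT_attention}, $[E]_{ij}$ additionally depends on the neighbourhood $\ccalN_i$, which is exactly the support of the $i$-th row of $\bbS_t$. Hence I can legitimately write $\bbE = \bbE(\bbX_t, \bbS_t)$.

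Invoking the hypotheses $\bbX_{t_1} = \bbX_{t_2}$ and $\bbS_{t_1} = \bbS_{t_2}$, I would conclude that $\tbx_{t_1}^i = \tbx_{t_2}^i$ for every $i$ and that the two graphs share the same neighbourhoods, so $e_{ij}$ and therefore $[E]_{ij}$ coincide for all $i,j$; consequently $\bbE$ evaluated at $t_1$ equals $\bbE$ evaluated at $t_2$. Substituting back into Eq.~\ref{eqn:graphAttentionConv}, each summand $(\bbE \odot \bbS_t)^k \bbX_t \bbA_k$ is assembled from quantities that are now identical at $t_1$ and $t_2$, so the sums over $k$ agree and Eq.~\ref{eqn:criterion_2} follows.

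There is no genuine analytic obstacle here; the result is essentially a restatement of the fact that $\ccalA_F$ is a pure, memoryless function of its two arguments. The one point I would take care to make explicit\,-\,and the only place where a careless argument could slip\,-\,is that the attention computation introduces no dependence on $t$ beyond $\bbX_t$ and $\bbS_t$ (in particular, no recurrence or accumulated state across simulation steps). This is precisely the property that distinguishes time invariance from permutation equivariance, and it is what the proposition is really asserting.
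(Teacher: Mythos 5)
Your proof is correct, but it takes a genuinely different route from the paper's. The paper disposes of this proposition in one sentence by appealing to the \emph{training setup}: since the imitation-learning data consists of time-independent pairs $(\bbZ_t^i, \bbU_t^{*i})$ with no sequential information, the framework is trained to produce consistent outputs from $(\bbZ_t^i, \bbS_t)$ alone, and time invariance is ``satisfied intrinsically.'' You instead verify the stated equation directly from the functional form of Eq.~\ref{eqn:graphAttentionConv}: you classify every quantity as either a frozen parameter ($\{\bbA_k\}$, $\boldsymbol{W}$, $K$, the nonlinearities) or a time-indexed input ($\bbX_t$, $\bbS_t$), show that the attention matrix satisfies $\bbE = \bbE(\bbX_t, \bbS_t)$ with no hidden state (via Eqs.~\ref{eqn:MAGAT_keyquery} and \ref{eqn:GAT_attention}, noting $\ccalN_i$ is the support of the $i$-th row of $\bbS_t$), and conclude that identical arguments force identical summands. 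Your argument is arguably the more faithful proof of the proposition \emph{as stated}, since the claim concerns the operator $\ccalA_F$ itself and holds for any fixed weights regardless of how they were obtained, whereas the paper's remark is really about the full trained policy and leans on the architecture lacking recurrence without spelling it out. What the paper's approach buys is a broader (if less rigorous) claim---that the entire pipeline, not just the graph convolution, behaves consistently across time steps---which is the property actually needed when redeploying the policy in simulation; your memoryless-function argument is the precise mathematical content underlying that remark, and your closing observation that the absence of accumulated state is exactly what separates time invariance from permutation equivariance captures why the proposition is worth stating at all.
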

\begin{proof}
This criterion is satisfied intrinsically by our imitation strategy (Sec.~\ref{sec:expert_assist_training}): the decentralized framework is trained to enable the robot to predict consistent action $\bbU_t^{*i}$ only with respect to input tensor $\bbZ_t^i$ and communication network $\bbS_{t}$ regardless of the time instant $t$.
\end{proof}

\section{Architecture} \label{sec:framework}

In this section, we start by introducing the dataset creation (Sec.~\ref{sec:Dataset_creation}), and move on to show how we process the observations (Sec.~\ref{sec:CNN_preprocessing}) and the detailed architecture of our proposed model (Sec.~\ref{sec:nn_arch}).
Finally, we present the training process, which is enhanced by an online expert (Sec.~\ref{sec:expert_assist_training}).

\subsection{Dataset Creation}\label{sec:Dataset_creation}
We generate grid worlds of size $W \times H$ and randomly place static obstacles of a given density.
For each grid world, we generate \textit{cases} where the start positions and goal positions are not duplicated. 
We filter duplicates and then call an expert algorithm to compute solutions. Invalid cases (those who do not have a solution, e.g., robots are trapped by static obstacles) are removed at this stage.
Towards this end, we run a coupled centralized expert algorithm: Enhanced Conflict-Based Search (ECBS)~\footnote{https://github.com/whoenig/libMultiRobotPlanning} with optimality bound $1.1$~\cite{barer2014suboptimal}.
This expert algorithm computes our `ground-truth paths' (the sequence of actions for individual robots) for a given initial configuration. Our data set comprises $30{,}000$ cases for any given map size and number of robots. This data is divided into a training set ($70\%$), a validation set ($15\%$), and a testing set ($15\%$). 
The three sets do not overlap with each other.

\subsection{Processing Observations}\label{sec:CNN_preprocessing}

Limited by the local FOV, each robot perceives an observation and processes it to be $\bbZ_{t}^{i} \in \reals^{3 \times W_{\mathrm{in}} \times H_{\mathrm{in}}}$, which consists of three channels, representing static obstacles, robots, and goal respectively. Note that, $W_{\mathrm{in}}= W_{\mathrm{FOV}}+2$ and likewise for $H_{\mathrm{in}}$:
when the goal is outside the FOV, we mark a point on the edge of the goal channel of $\bbZ_{t}^{i}$ to indicate the direction of the target (Fig. \ref{fig:flowchart_training}).
This ensures that each robot has only the direction of the target when the goal is outside the FOV.

\subsection{Network Architecture}
\label{sec:nn_arch}
\textbf{CNN-based Perception.} Compared to our previous work \cite{Qingbiao2019}, we upgrade the CNN module with ResNet blocks as follows:
we implement a feature extractor with 3 stacked residual blocks.
Different from the conventional \texttt{Conv2d-BatchNorm2d-ReLU-MaxPool2d} structure, there is a skip connection in each block joining the features before and after the block together.
This type of residual connection has been widely used in feature extraction work, and it has been shown to be beneficial to reducing overfitting and improving performance~\cite{Szegedy2017}.

\textbf{Graph-based Communication.} 
\qingbiao{Each individual robot carries a local copy of the graph convolution layers and communicates its compressed observation vector $\tbx_{t}^{i}$ with neighboring robots within its communication radius $r_{\mathrm{COMM}}$.
The resulting fused feature is then passed to the next stage for selecting the action primitive, leading to a localized decision-making scheme.}

In the graph convolution architecture, we explore several models with two main types of graph convolution layers: GNN and MAGAT.
To better demonstrate the improvements that MAGAT can make in our task, we compare each MAGAT model with the corresponding GNN model with the same configuration except the graph convolution layers.
These models are defined in the form of ``\texttt{[Config\_Label]-[Type]-[Num\_Features]}'':
\begin{enumerate}[leftmargin=*]
	\item \texttt{Config\_Label}: can be \texttt{GNN} or \texttt{MAGAT}. It refers to the graph convolution layer used in this model.
	\item \texttt{Type}: ``\texttt{F}'' refers to normal CNN-MLP-GraphConvLayer\\
	-MLP-Action pipeline, while ``\texttt{B}'' refers to a bottleneck structure (Fig.~\ref{fig:flowchart_training}), which concatenates the feature before the graph convolution layers with those processed features after the graph convolution layers.
	This bottleneck structure augments the features after communication with the features extracted by the robot itself.
	\item \texttt{Num\_features}: the dimensions of features that engage in communication. In this paper, we experiment with $128$, $64$, $32$, and $16$.
\end{enumerate}

To demonstrate the information reduction in communication, we constrain the dimensions of extracted features from the CNN module to be 128, and further reduce the dimensionality using an additional MLP layer.
The number of input observations $F$ and output observations $G$ are the same and set to \texttt{Num\_features}.
This effectively reduces the dimensions of features that can be shared by the communication network.

In this paper, we focus on $L=1$ (one layer of graph convolution) and $K=2$ (one-hop communication).
Each robot is required to send all its extracted features and receive information from neighboring robots only once. 
\textit{With the help of our mechanism, the robot is able to direct its attention toward specific communication links, based on the relative importance of the information contained in the messages it is receiving from neighboring robots.}

\textbf{Action Policy}. 
In the last stage, 
an MLP followed by a softmax function is used to decode the aggregated features resulting from the communication process into the five motion primitives (up, down, left, right, and idle).
During the simulation, the action $\tbu_{t}^{i}$ taken by robot $i$ is predicted by a stochastic action policy based on the probability distribution over motion primitives (weighted sampling). \qingbiao{We deployed collision shielding in~\cite{Qingbiao2019} to ensure collision-free paths.}
Compared to our previous framework~\cite{Qingbiao2019}, we change the action policy used in the simulations of the validation process from a softmax function (deterministic policy) to weighted sampling (stochastic policy), as using a consistent action policy for validation and test can better indicate which models to select after training.

\vspace{-0.5em}

\subsection{Training and Online Expert}\label{sec:expert_assist_training}
After training cases are generated and the optimal trajectory of each robot is computed by a centralized controller,
the model is trained by the trajectory data, such that it imitates the actions and behaviors of the ``expert''.
We train the model with the pair collection $\ccalT=\{(\bbZ_t^i, \bbU_t^{*i})\}_{i=1,...,N_{case}, t=1,...,T_{max}^{i}}$, where $\bbZ_t^i$ is the processed observation (Sec. \ref{sec:CNN_preprocessing}) at time $t$ of case $i$, while $\bbU_t^{*i}$ is the expert action at this situation consisting of $\tbu_{t}^{*i}$ taken by robot $i=1,...,N_{robot}$.
Note that $N_{case}$ and $T_{max}^{i}$ are the total number of cases and the total steps of case $i$, respectively.
Thus, our training does not involve time sequence information, requiring the model to learn ``instant'' reactions based on observations at any given time $t$.

To further enhance model learning, we deploy the \textit{Online Expert} proposed in our previous work~\cite{Qingbiao2019} right after every validation process:
We select $n_{OE}$ cases randomly from the training set and run the simulation.
New solutions are generated for the failed cases using the ECBS solver.
These new solutions become new cases and are appended to the training set:
$\ccalT_{new}=\ccalT \cup \{(\bbZ_t^i, \bbU_t^{*i})\}_{i=1,...,n_{OE}, t=1,...,T_{max}^{i}}$.

Therefore, our training objective is to obtain a classifier $\ccalF$ with trainable parameters $\theta$ given the training dataset $\ccalT$ that is gradually augmented. $\mathcal{L}$ is the loss function.
\begin{equation} \label{eqn:trainingObjective}
\hat{\theta}=
    \argmin_{\theta} \sum_{(\bbZ_t^i, \bbU_t^{*i}) \in \ccalT} {\mathcal{L}(\bbU_{t}^{\ast},\ccalF(\bbZ_{t}^{i}, \ccalG_{t}(\bbZ_{t}^{i})))}~.
\end{equation}
Recall that $\ccalG_{t}$ is the status of the communication network at time $t$ depending on current situation $\bbZ_{t}^{i}$.




\section{Experiments} 
In this section, we firstly introduce the metrics we use in the evaluation (Sec. \ref{sec:metrics}), and then provide details of the experimental setup (Sec. \ref{sec:exp_setup}).
We then move on to introduce the map sets we use in the experiments (Sec. \ref{sec:scenarios}) and the baselines against which we compare our proposed methods (Sec. \ref{sec:baselines}).
Finally, we present and discuss our experimental results (Sec. \ref{sec:results}).


\subsection{Metrics} 
\label{sec:metrics}
1) {\textit{Success Rate ($\alpha$)}} $= n_\mathrm{success}/n$, is the proportion of successful cases over the total number of tested cases $n$. A case is considered successful (\textit{complete}) when \textit{all} robots reach their goals prior to exceeding the maximum allowed steps.
In our case, the max allowed step  (i.e., maximum makespan) is normally set to $T_{\max} = 3 T_{\mathrm{MP}^*}$, where $T_{\mathrm{MP}^*}$ is the makespan (the time period from the first robot moves to the last robot arrives its goal) of the expert solution.

2) {\textit{Flowtime Increase ($\delta_{\mathrm{FT}}$)}} $=(\mathrm{FT}-\mathrm{FT}^*)/\mathrm{FT}^*$, measures the difference between the sum of the executed path lengths ($\mathrm{FT}$) and that of the expert (target) path ($\mathrm{FT}^*$). 
We set the length of the predicted path $T^{i}=T_{\max} = 3 T_{\mathrm{MP}^*}$, if the robot $i$ does not reach its goal.  
This contributes a penalty to the total flowtime if a robot fails to reach its goal.

\subsection{Experimental Setup}\label{sec:exp_setup}
Our simulations were conducted using the Cambridge High-Performance Computing Wilkes2-GPU NVIDIA P100 cluster with Intel(R) Xeon(R) CPU E5-2650 v4 (2.20GHz).
We used the Adam optimizer with momentum $0.9$. 
The learning rate $\gamma$ was scheduled to decay from $10^{-3}$ to $10^{-6}$ within 300 epochs, using cosine annealing. 
We set the batch size to $64$, and L2 regularization to $10^{-5}$. 
The online expert was deployed every $C=4$ epochs on $n_{\mathrm{OE}}=500$ randomly selected cases from the training set.
Validation was carried out every 4 epochs with 1000 cases that were exclusive of the training/test set.  


\begin{figure*}[ht]
    \centering
    \begin{subfigure}[t]{0.48\columnwidth}
        \centering
        \includegraphics[width=\columnwidth]{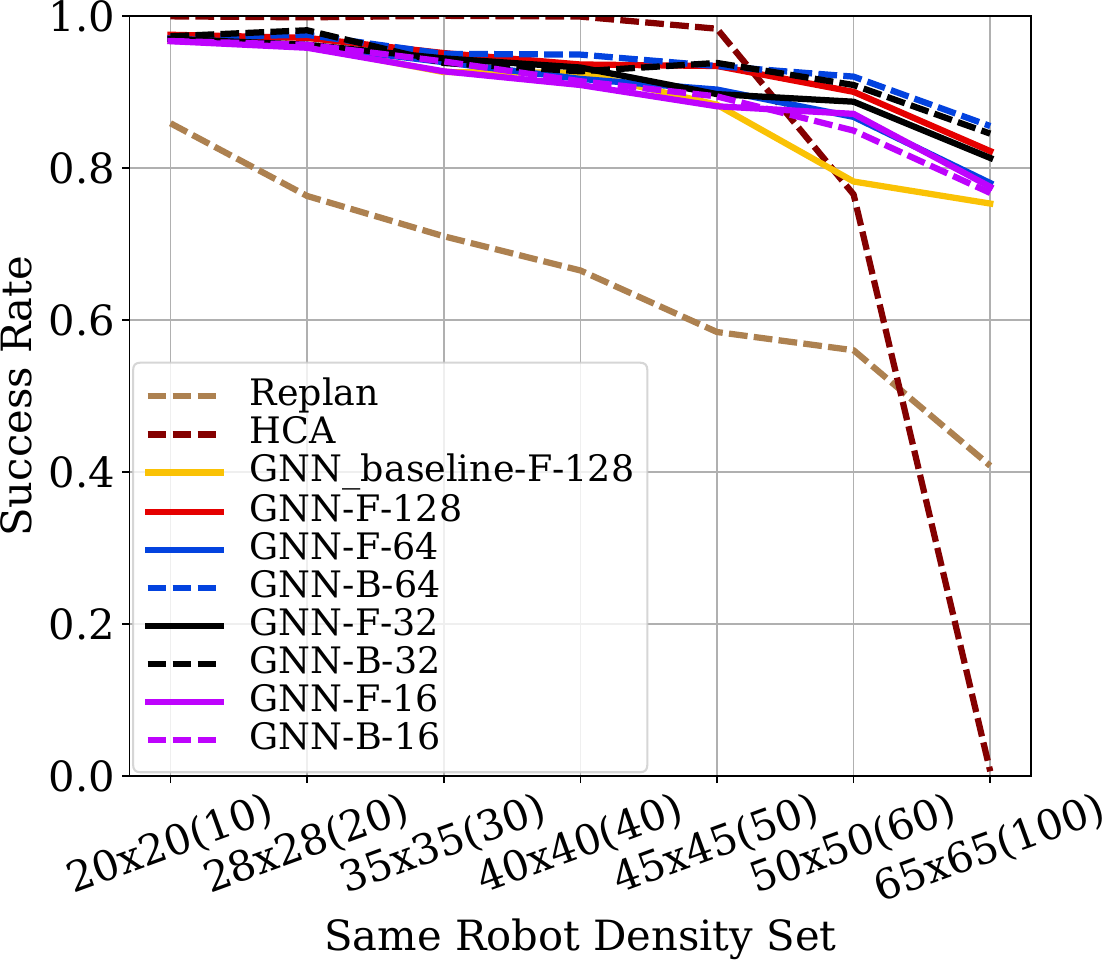}
        \vspace{-2em}
        \caption{{\scriptsize \textit{Success rate} ($\alpha$) at GNN}}
        \vspace{-2em}
        \label{fig:rate_ReachGoal_Same_Robot_Density_GNN_comparison}
    \end{subfigure}
    \begin{subfigure}[t]{0.48\columnwidth}
        \centering
        \includegraphics[width=\columnwidth]{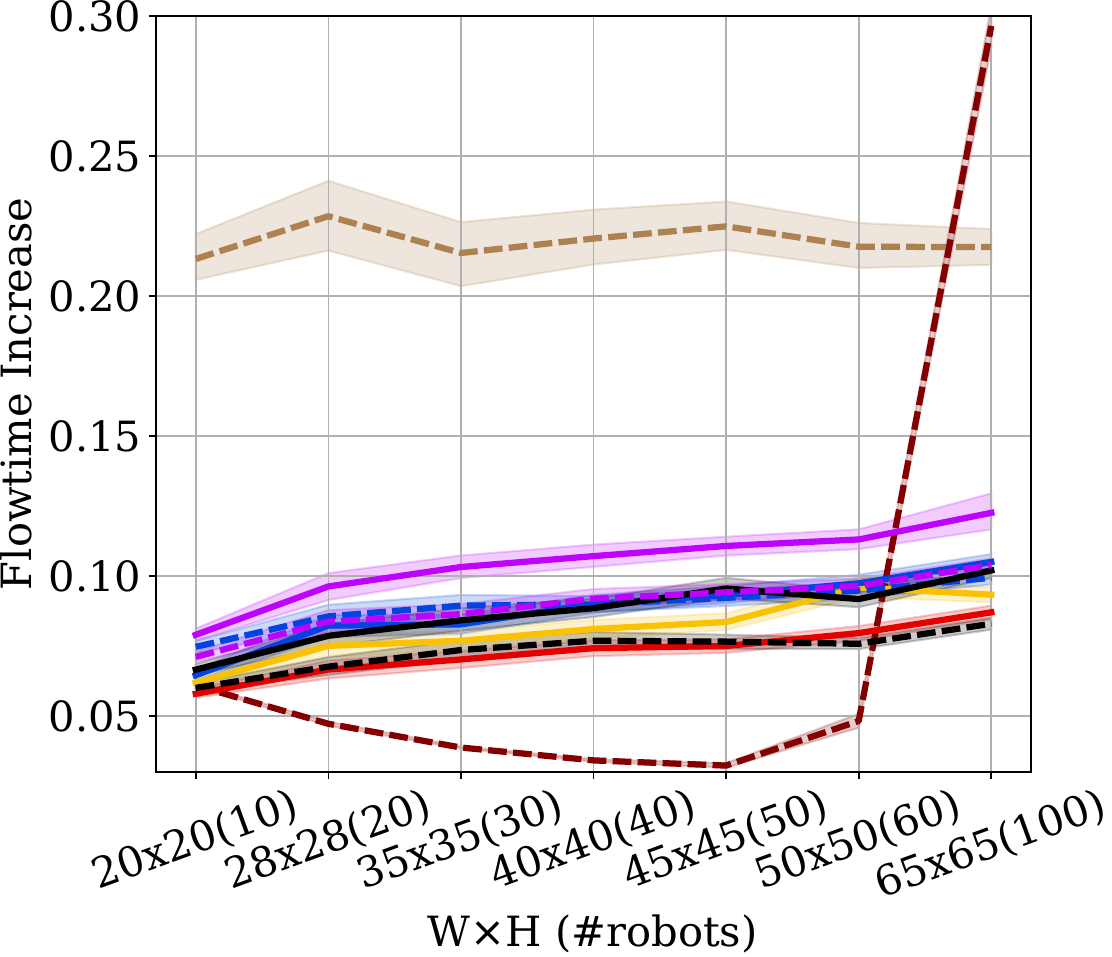}
        \vspace{-2em}
        \caption{{\scriptsize \textit{Flowtime increase} ($\delta_{\mathrm{FT}}$) at GNN}}
        \vspace{-1em}
        \label{fig:mean_deltaFT_Same_Robot_Density_GNN_comparison}
    \end{subfigure}
    \begin{subfigure}[t]{0.48\columnwidth}
        \centering
        \includegraphics[width=\columnwidth]{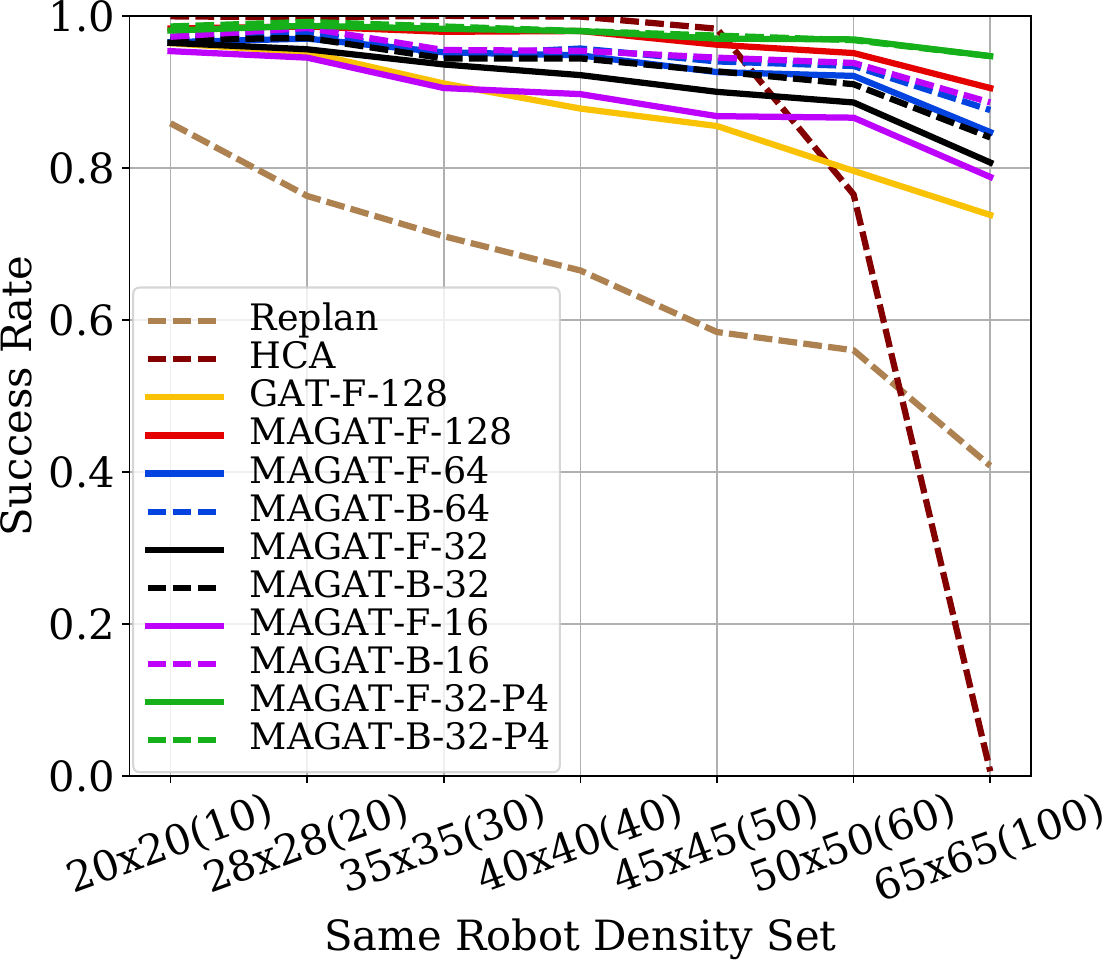}
        \vspace{-2em}
        \caption{{\scriptsize \textit{Success rate} ($\alpha$) at MAGAT}}
        \vspace{-1em}
        \label{fig:rate_ReachGoal_Same_Robot_Density_GAT_comparison}
    \end{subfigure}
    \begin{subfigure}[t]{0.48\columnwidth}
        \centering
        \includegraphics[width=\columnwidth]{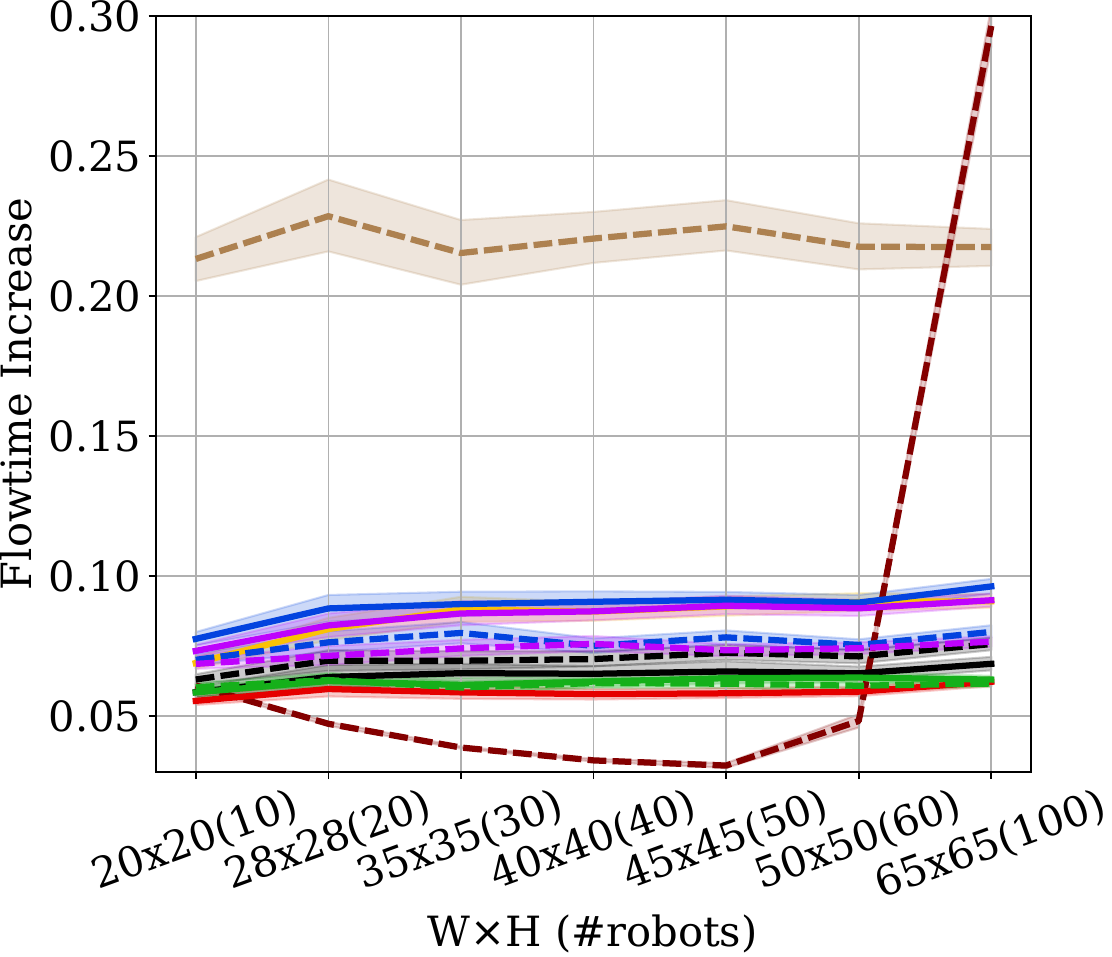}
        \vspace{-2em}
        \caption{{\scriptsize \textit{Flowtime increase} ($\delta_{\mathrm{FT}}$) at MAGAT}}
        \vspace{-1em}
        \label{fig:mean_deltaFT_Same_Robot_Density_GAT_comparison}
    \end{subfigure}
    \begin{subfigure}[t]{0.48\columnwidth}
        \centering
        \includegraphics[width=\columnwidth]{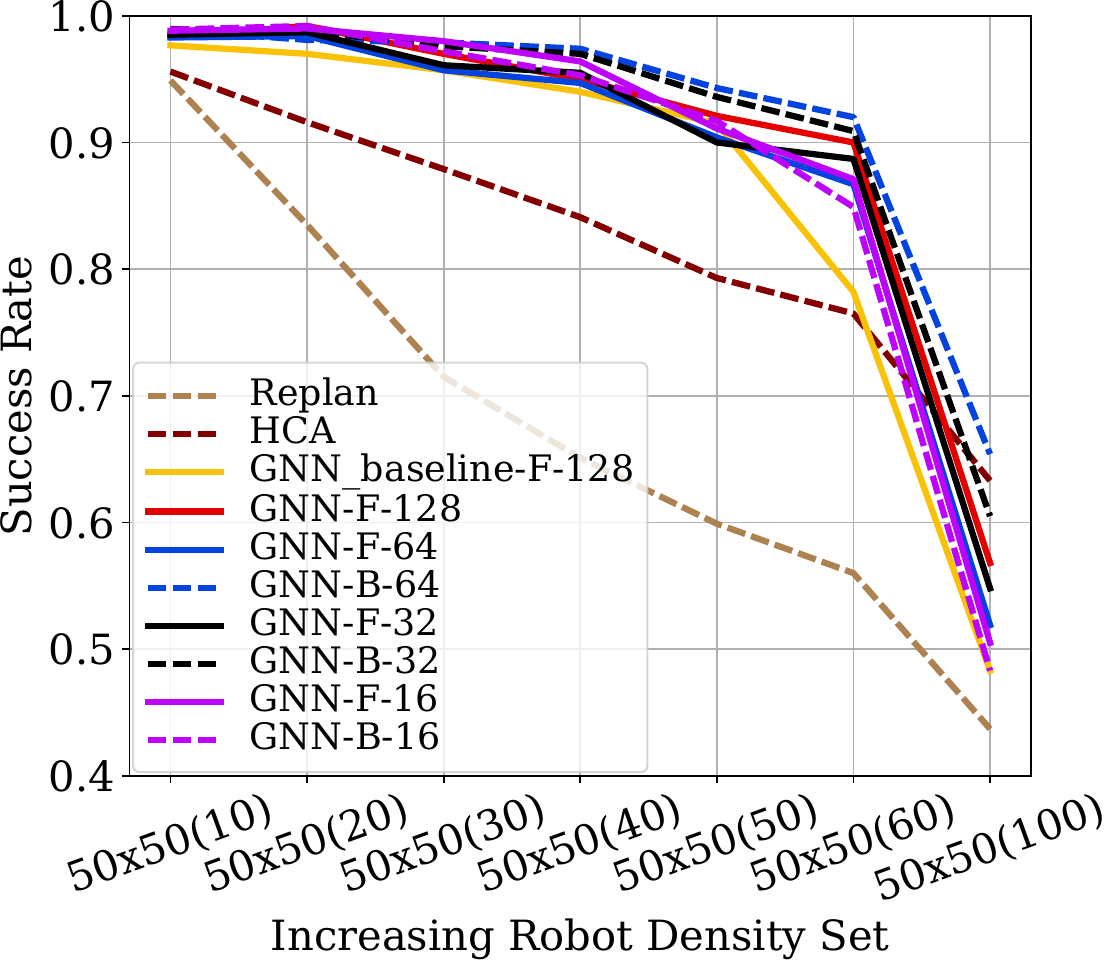}
        \vspace{-2em}
        \caption{{\scriptsize \textit{Success rate} ($\alpha$) at GNN}}
        \vspace{-1em}
        \label{fig:rate_ReachGoal_Increase_Density_GNN_comparison}
    \end{subfigure}
    \begin{subfigure}[t]{0.48\columnwidth}
        \centering
        \includegraphics[width=\columnwidth]{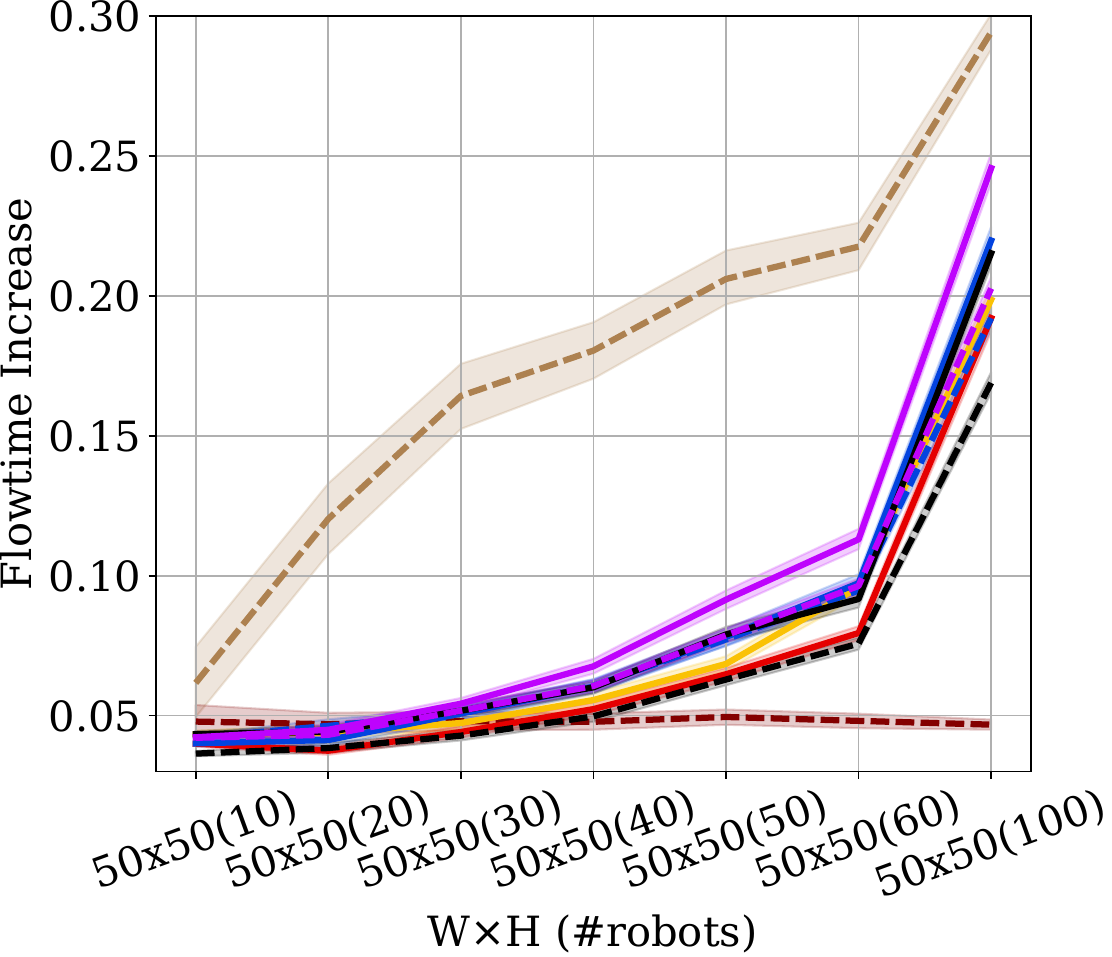}
        \vspace{-2em}
        \caption{{\scriptsize \textit{Flowtime increase} ($\delta_{\mathrm{FT}}$) at GNN}}
        \vspace{-1em}
        \label{fig:mean_deltaFT_Increase_Density_GNN_comparison}
    \end{subfigure}
    \begin{subfigure}[t]{0.48\columnwidth}
        \centering
        \includegraphics[width=\columnwidth]{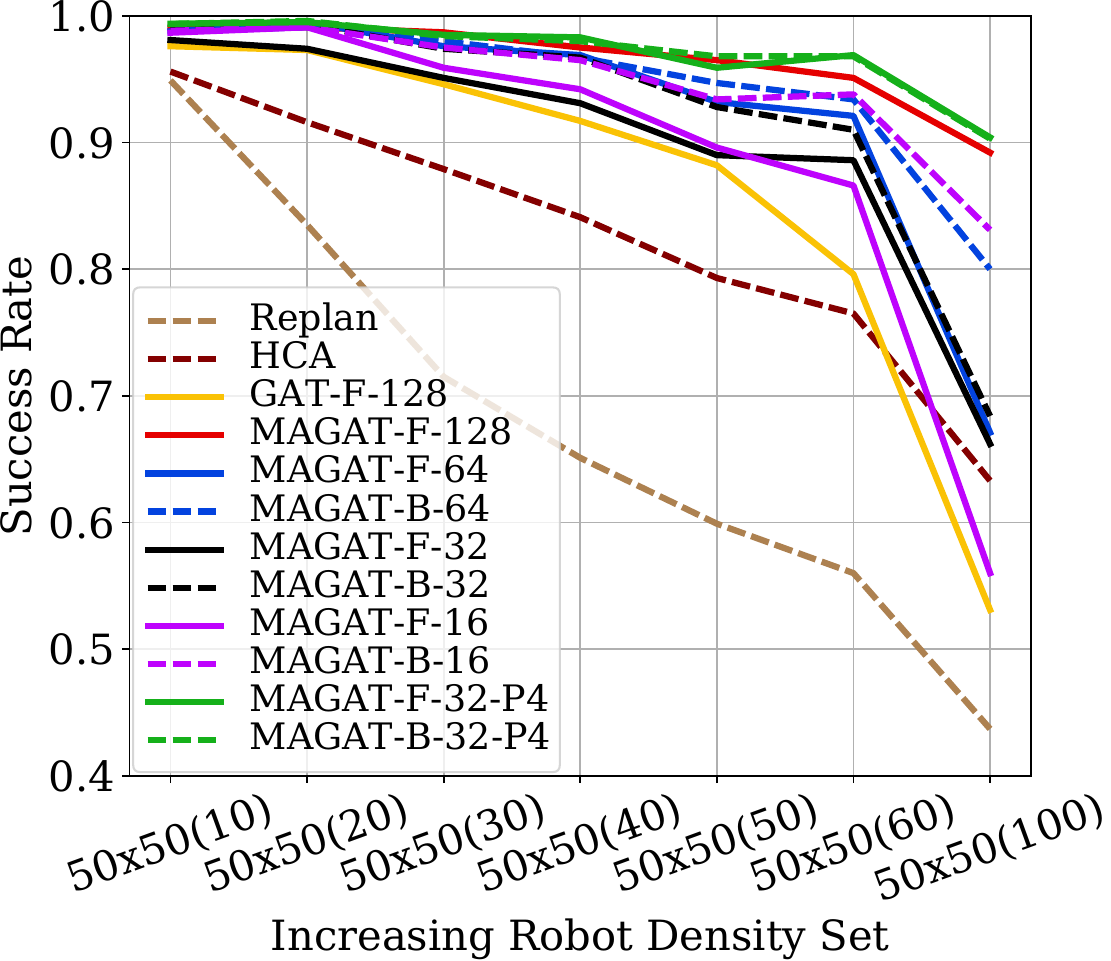}
        \vspace{-2em}
        \caption{{\scriptsize \textit{Success rate} ($\alpha$) at MAGAT}}
        \vspace{-1em}
        \label{fig:rate_ReachGoal_Increase_Density_GAT_comparison}
    \end{subfigure}
    \begin{subfigure}[t]{0.48\columnwidth}
        \centering
        \includegraphics[width=\columnwidth]{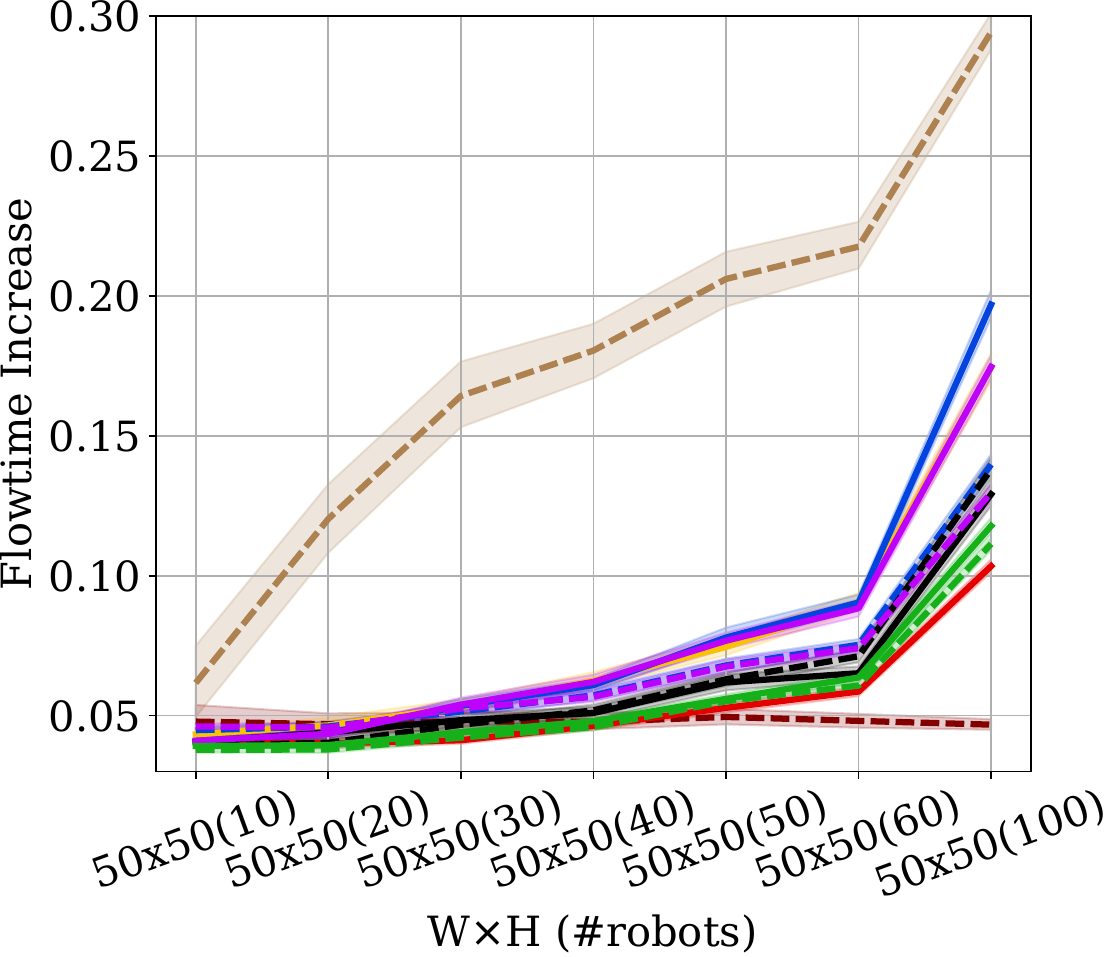}
        \vspace{-2em}
        \caption{{\scriptsize \textit{Flowtime increase} ($\delta_{\mathrm{FT}}$) at MAGAT}}
        \vspace{-1em}
        \label{fig:mean_deltaFT_Increase_Density_GAT_comparison}
    \end{subfigure}
    \vspace{-0.5em}
    \caption{{The \textit{success rate} ($\alpha$) and \textit{flowtime increase} ($\delta_{\mathrm{FT}}$) against the change of environment setup. Here we present the results of GNN models on the left two columns and MAGAT models on the right; \qingbiao{we include \texttt{HCA} and \texttt{Replan} as baselines}.
    The first row is for ``Same Robot Density Set'', while the second is for ``Increasing Robot Density Set''.
    These figures show the effects of reducing bandwidth or using bottleneck structure. 
    In the legend (\texttt{[Graph\_Layer\_Name]-[Type]-[Num\_Features]}), 
 \texttt{Graph\_Layeer\_Name} are \texttt{GNN} or \texttt{MAGAT}, while 
 \texttt{Type} - ``\texttt{F}'' and solid line refer to normal CNN-MLP-GNN/MAGAT-MLP-Action pipeline, and ``\texttt{B}'' and dashed line refer to a bottleneck structure. \texttt{[Num\_Features]} includes 128 (red), 64 (blue), 32 (black), and 16 (purple).}}
    \label{fig:results_generalization_GNN}
     \vspace{-3em}
\end{figure*}

\subsection{Scenarios}
\label{sec:scenarios}

To better compare the generalization capability, we prepare training/valid/test datasets according to Table~\ref{table:mapsets}.
We train and validate our models with 20$\times$20 maps and 10 robots, and then test on all scenarios shown in Table~\ref{table:mapsets}.

In practice, robot density ($\rho_{robot}$), simply computed by $\frac{n_{robot}}{W \times H}$, is an effective metric measuring how crowded a scenario is. 
Note that the obstacle densities in all scenarios in Table~\ref{table:mapsets}, \ref{table:large_scale_dataset} are $10\%$.
We test model generalization ability with two sets of maps.
One is ``Same Robot Density Set'', with all scenarios having the same robot density.
Using this map set, we are able to test the generalization of the models on the maps with similar robot density to the training sets.
On the other hand, ``Increasing Robot Density Set'' is a set with increasing robot density, but the map size remains the same.
With this map set, we gradually increase the crowdedness, leading to our evaluation of how our model can perform under sparser or more congested environments.

Additionally, 
we prepare a super-large-scale test set, which has over 500 robots and a very large map size (Table~\ref{table:large_scale_dataset}).
There are 50 test cases in each large-scale scenario.

\begin{table}
\caption{Two major map sets for generalization test. 
Training scenario 20$\times$20 with 10 robots has 30000 cases, partitioned into 70\% training set, 15\% validation set, and 15\% test set.
Other scenarios have 1000 testing cases for the generalization test.}
\label{table:mapsets}
\centering
\begin{tabular}{cccc}
\toprule
\multicolumn{2}{c}{Same Robot Density Set} & \multicolumn{2}{c}{Increasing Robot Density Set}  \\
\cmidrule(lr{1em}){1-2}
\cmidrule{3-4}
Map ($n_{robot}$) & Robot Density              & Map ($n_{robot}$) & Robot Density               \\
\cmidrule(lr{1em}){1-2}
\cmidrule{3-4}
20x20
  (10)  & 0.025                          & 50x50 (10)    & 0.004                           \\
28x28
  (20)  & 0.025                          & 50x50 (20)    & 0.008                           \\
35x35
  (30)  & 0.025                          & 50x50 (30)    & 0.012                           \\
40x40
  (40)  & 0.025                          & 50x50 (40)    & 0.016                           \\
45x45
  (50)  & 0.025                          & 50x50 (50)    & 0.02                            \\
50x50
  (60)  & 0.025                          & 50x50 (60)    & 0.024                           \\
65x65
  (100) & 0.025                          & 50x50 (100)   & 0.04               \\
  \bottomrule
\end{tabular}
\vspace{-2em}
\end{table}

\begin{table}
\caption{Large Scale Map Set. The expert computation time in the $3rd$ column is the time cost of using the ECBS solver with an optimality bound $5$ to solve the case on a high-performance CPU core. The computation time (successful cases) of our proposed model \texttt{MAGAT-B-32-P4} in the same machine is reported in the $4th$ column. Statistics in bold highlight the significant reduction in the computation time of our model.
}
\label{table:large_scale_dataset}
\centering
\begin{tabular}{cccc}
\toprule
Map ($n_{robot}$)            &$\rho_{robot}$& Expert time cost (s)    & Time cost (s) (std.)   \\
\midrule
200x200 (500) &  0.0125                       & \textasciitilde{}3,000 - 4,000  & \textbf{510.1 (135.0)}   \\
200x200 (1000) &    0.025                     & \textasciitilde{}33,000. - 35,000 & \textbf{1286.8 (368.0)}      \\
100x100 (500)  &  0.05               & \textasciitilde{}1,100 - 1,200  & \textbf{279.92 (83.0)}    \\
\bottomrule
\end{tabular}
 \vspace{-2.5em}
\end{table}

\subsection{Baselines}
\label{sec:baselines}
We introduce two models and two non-learning based methods as our baselines.
\begin{enumerate}[leftmargin=*]
    \item \texttt{GNN\_baseline-F-128}: The GNN framework we proposed in previous work~\cite{Qingbiao2019}.
    Since then, we have upgraded the CNN module for feature extraction and the action policy in the validation process (details in Sec.~\ref{sec:nn_arch}).
    Therefore, it represents a good baseline demonstrating the basic improvement that we make to the framework.
    \item \texttt{GAT-F-128}: The framework is the same as our MAGAT, but instead of our attention mechanism (Eq.~\ref{eqn:MAGAT_keyquery}),
    the mechanism of GAT~\cite{velivckovic2017graph} (introduced in Eq.~\ref{eqn:GAT_lintransform}) is used.
    \item \texttt{HCA}: \qingbiao{A simple domain abstraction with a heuristic to improve the performance of \textbf{centralized} multi-robot path planning \cite{silver2005cooperative}.}
    \item \texttt{Replan}: \qingbiao{Global Re-planning (\texttt{Replan}) is an interaction-aware path planning method~\cite{wang2020mobile}. If the robot encounters a conflict, the A* method is used to find an alternative path from the current cell to the goal cell, considering all other robots as obstacles.}
\end{enumerate}

\subsection{Results} \label{sec:results}
All the results shown in this section are obtained by training on a dataset composed of 20$\times$20 maps with $10$ robots (21000 training cases).

\subsubsection{Comparison with baselines}

As shown in Fig.~\ref{fig:results_generalization_GNN}, our \texttt{GNN-F-128} and \texttt{MAGAT-F-128} both outperform their baseline models, \texttt{GNN\_baseline-F-128} and \texttt{GAT-F-128} respectively.
For instance, at ``Same Robot Density Set'', \texttt{GNN-F-128}\,(solid red line) performs slightly better than its baseline \texttt{GNN\_baseline-F-128}\,(solid yellow line) 
when the map size is lower than 40.
The gap becomes significant ($>5\%$ in \textit{success rate}) as we scale the testing set beyond  $40\times40$ with 40 robots. 
Similar results are observed at ``Increasing Robot Density Set'', and these demonstrate that our upgrades on the CNN module and the action policy are useful. 
Under both map sets,
\texttt{MAGAT-F-128}\,(solid red line) outperforms \texttt{GAT-F-128}\,(solid yellow line) in both \textit{success rate} and \textit{flowtime increase} significantly, whose performance is only close to that of \texttt{MAGAT-F-16}\,(solid purple line).
The underlying reasons for 
this are discussed in Sec.~\ref{subsec:MAGAT}.
\qingbiao{\texttt{HCA} (light brown) performs quite well with a high success rate and small flowtime increase when cases are simple, but its performance suddenly drops down starting from 40 robots; it cannot find a solution for 100 robots. The success rate (completeness) of \texttt{Replan} (dark red) is generally low compared with our methods, with a higher flowtime increase.}

\subsubsection{\qingbiao{Effect of reducing the information shared among robots}}
\qingbiao{We further experiment with limited communication bandwidth, i.e., limiting the size of the shared features from 128~(red), 64~(blue), 32~(black) to 16~(purple), \qingbiao{where the full communication bandwidth is set as 128}. 
In Fig.~\ref{fig:results_generalization_GNN}, we show the \textit{success rate} and \textit{flowtime increase} of GNN and MAGAT family, respectively.
With both GNN and MAGAT configurations, the performance will decrease as we reduce the size of shared features.
E.g., as the shared features decrease from 128, 64, 32 to 16, the \textit{success rate} of MAGAT models at 65$\times$65 with 100 robots drop from 91\%, 85\%, 81\% to 78\% respectively.
Yet, the drop \qingbiao{of MAGAT} is less pronounced than with GNN, where the performance of GNN models decreases from 83\%, 78\%, 82\% to 77\% respectively, as the shared features reduce from 128, 64, 32 to 16.}

\subsubsection{Generalization under different robot densities}
Recall that the ``Same Robot Density Set'' has the same robot density for all its cases.
Generalizability is an essential factor to evaluate a machine learning model in real-world applications.
The mobility of multi-robot systems naturally leads to time-varying communication topologies.
Thus, ``Increasing Robot Density Set'' allows us to better evaluate the model's flexibility from sparse to more crowded situations.
As shown in the Fig.~\ref{fig:rate_ReachGoal_Increase_Density_GNN_comparison},
Fig.~\ref{fig:mean_deltaFT_Increase_Density_GNN_comparison}, Fig.~\ref{fig:rate_ReachGoal_Increase_Density_GAT_comparison} and Fig.~\ref{fig:mean_deltaFT_Increase_Density_GAT_comparison},
most MAGAT models can maintain the \textit{success rate} above 92\% and the \textit{flowtime increase} lower than 10\% even as we increase robot number from 10 to 60 on the same map size (50$\times$50), which demonstrates their good adaptation to different crowdedness.

\subsubsection{Effect of bottleneck architecture}
Fig.~\ref{fig:results_generalization_GNN} also explores the performance of the skip-connected bottleneck structure with GNN and MAGAT.
In real-world applications, the communication bandwidth is usually limited
which yields the need to have fewer features being shared but performance maintained.
For fair comparisons, models are compared to their ``baseline'' models, for example, comparing \texttt{GNN-F-64}\,(\textbf{solid} blue line) and \texttt{GNN-B-64}\,(\textbf{dashed} blue line) because they have the same size of features shared across the communication network.
With the bottleneck setting, \texttt{GNN-B-64} outperforms \texttt{GNN-F-64} by around 8\% in \textit{success rate} under the scenario of $65\times65$ maps with 100 robots (Fig.~\ref{fig:rate_ReachGoal_Same_Robot_Density_GNN_comparison}).
In Fig.~\ref{fig:rate_ReachGoal_Same_Robot_Density_GAT_comparison},
even though the performance of MAGAT drops dramatically from 90\% (\texttt{MAGAT-F-128}, solid red line) to around 78\% (\texttt{MAGAT-F-16}, solid purple line) in \textit{success rate} at 65$\times$65 maps with 100 robots, the bottleneck structure retains its performance such that it is comparable with \texttt{MAGAT-F-128}.
This gives the insight that the good generalization ability of MAGAT rests on a sufficient size of the shared features, but it can be preserved by introducing the bottleneck structure.
We can conclude that though GNN models do not significantly benefit from a bottleneck structure, this structure helps MAGAT models maintain generalization performance significantly.

\subsubsection{\qingbiao{Effect of multi-head attention}}
\qingbiao{
We also evaluate \texttt{MAGAT-F-32}\,(\textbf{solid} blue line) and \texttt{MAGAT-B-32}\,(\textbf{dashed}\\
blue line) on their multi-head versions \texttt{MAGAT-F-32-P4}\,\\
(\textbf{solid}\,green\,line) and \texttt{MAGAT-B-32-P4}\,(\textbf{dashed}\,green\,line).
Note that $P$\,=\,$4$ parallel MAGAT convolution layers allow larger model capacities, and individual heads can learn to focus on different representation subspaces of the node features~\cite{vaswani2017attention}.
For these two models, as there are $P\times F=4\times 32=128$ features being shared, the total bandwidth (or the size of shared features) is the same as \texttt{MAGAT-F-128}, leading to a fair comparison.
In Fig.~\ref{fig:rate_ReachGoal_Same_Robot_Density_GAT_comparison},
Fig.~\ref{fig:mean_deltaFT_Same_Robot_Density_GAT_comparison},
Fig.~\ref{fig:rate_ReachGoal_Increase_Density_GAT_comparison} and
Fig.~\ref{fig:mean_deltaFT_Increase_Density_GAT_comparison}, 
the multi-head version demonstrates better performance across all the tests regardless of the robot density and map size.
Both multi-head models achieve $95\%$ \textit{success rate} in 50$\times$50, 100 robots, and their results for \textit{flowtime increase} under ``Same Robot Density Set'' remain at low values ($\delta_{\mathrm{FT}}$\,<\,$0.065$) even with increasing robot numbers.}

\subsubsection{Super-large-scale generalization}
\label{sec:super_large_scale_generalisation}
\begin{figure}[tb]
    \centering
    \begin{subfigure}[t]{0.48\columnwidth}
        \centering
        \includegraphics[width=\columnwidth]{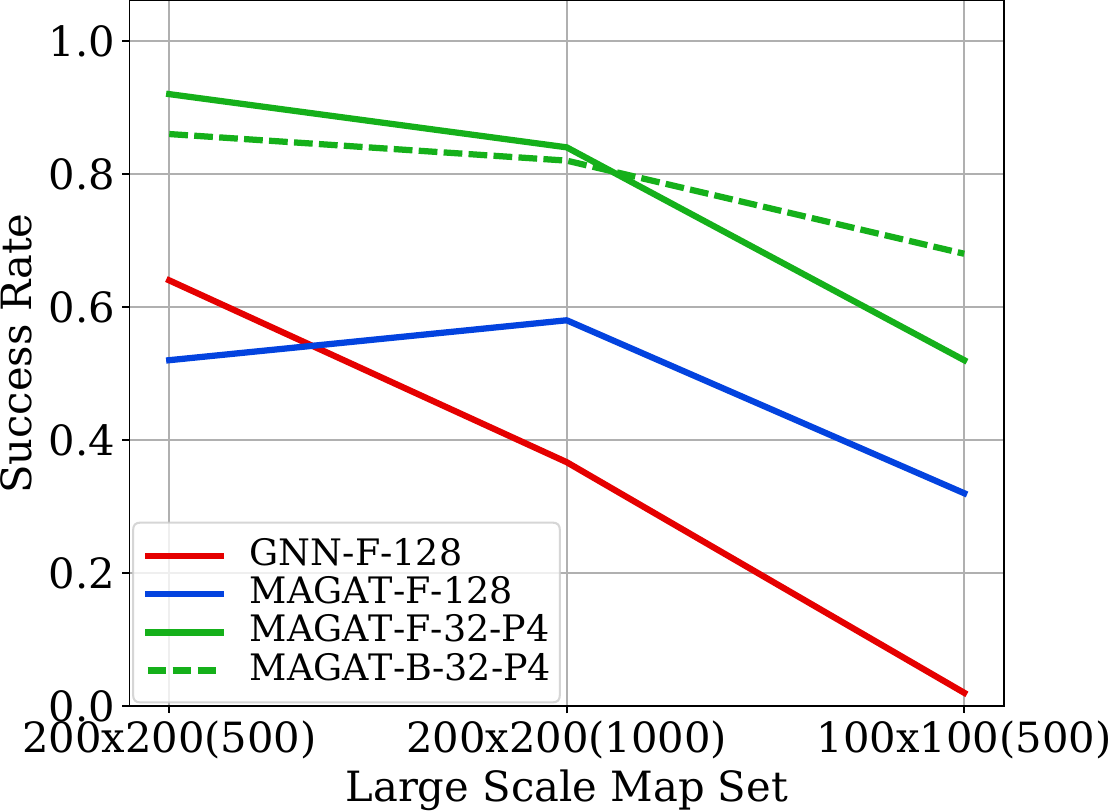}
        \vspace{-1.5em}
        \caption{Success rate}
        \label{fig:rate_ReachGoal_large}
        \vspace{-0.8em}
    \end{subfigure}
    \begin{subfigure}[t]{0.48\columnwidth}
        \centering
        \includegraphics[width=\columnwidth]{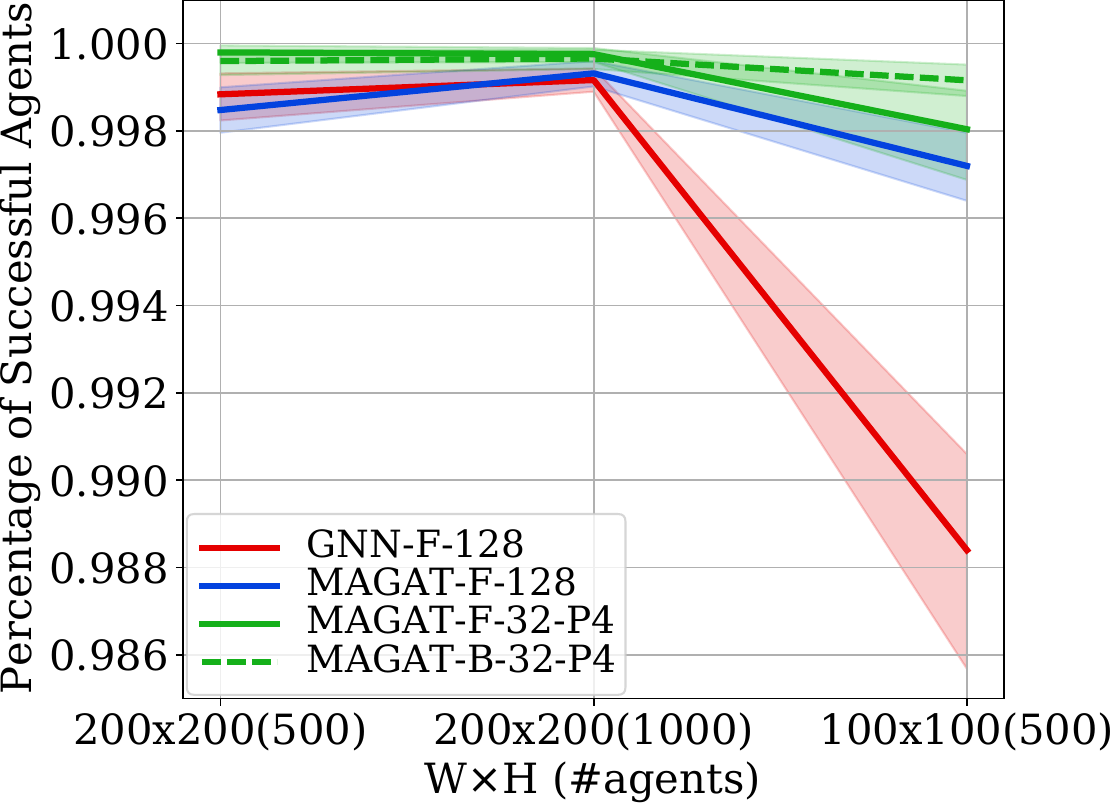}
        \vspace{-1.5em}
        \caption{$p_{\mathrm{robot}}$}
        \label{fig:percentage_reachgoal_large}
        \vspace{-0.8em}
    \end{subfigure}
    \vspace{-0.5em}
    \caption{{\normalfont Generalization test on Large Scale Map Set.
    a) shows the \textit{success rate}.~b) shows the percentage of successful robots ($p_{\mathrm{rg}}=\frac{n_{\mathrm{robots\,reach\, goal}}}{n_{\mathrm{robot}}}$), indicating that those model with low \textit{success rates} can still successfully navigated most of the robots to their goals.
    }}
    \label{fig:results_generalization_large}
     \vspace{-3em}
\end{figure}

We also test GNN and MAGAT with the large-scale map set, which consists of harder cases that set high requirements for such a model that is trained only on 20$\times$20 with 10 robots.
Table~\ref{table:large_scale_dataset} shows that, with such a large amount of robots ($1000$ robots), traditional expert algorithms are not capable of solving the problem within an acceptable time.
Given the decentralized nature of our framework, the trained models are efficient in the sense that their computation is distributed among the robots, demanding only communication exchanges with neighboring robots.
Even though \texttt{MAGAT-B-32-P4} is only trained by the expert solutions of 20$\times$20 with $10$ robots, 
it can obtain a \textit{success rate} above $80\%$ at $200\times200$ map with $1000$ robots, with only $\frac{1}{30}$ of the computation time taken by the centralized coupled expert (Table~\ref{table:large_scale_dataset}). Fig.~\ref{fig:percentage_reachgoal_large} demonstrated that in all cases, at least $98.6\%$ robots have already reached their goals. 
At 100$\times$100 map with $500$ robots, MAGAT models  successfully achieve at least $p_{\mathrm{rg}}$\,=\,$99.6\%$ robot navigation, but for GNN this number drops down to $98.6\%$.  

\subsubsection{Summary}
In conclusion, we note that MAGAT has very promising potential in learning a generalizable and flexible dynamic decision-making policy. 
In general, even in some cases where MAGAT and GNN have a similar \textit{success rate}, MAGAT reduces the flowtime further, leading to more robots achieving their goals and more optimal paths.
MAGAT also shows its ability to learn more general knowledge about path finding, which is supported by the fact that it can achieve high performance in large-scale and challenging cases even though it is trained in very simple cases.
We also demonstrate that the multi-head attention of MAGAT can improve performance. The trained model achieves a 47\% improvement over the benchmark success rate in the $200\times200$ map with $1000$ robots, where the testing instances are $\times$100 larger than the training instances.

\vspace{-0.4em}

\section{Conclusion and Future Work}
\textbf{Conclusion}. We conduct pioneering research that utilizes a decentralized message-aware graph attention network to solve the multi-agent path planning problem.
We formulate our problem as a sequential decision-making problem, and we incorporate an attention mechanism to enable the GNN 
to selectively aggregate message features.
We  
prove theoretically and empirically
that MAGAT is capable of dealing with dynamic communication graphs, and that it demonstrates good generalizability on unseen environment settings and strong scalability on super large-scale cases while we train the model with only very simple cases.
We also show that the skip-connected bottleneck structure is a way to maintain model performance while reducing the information being shared.
This feature enables MAGAT to achieve good performance while transmitting less data, leading to significant value in practice and applications.
The results demonstrate that the multi-head attention is beneficial to MAGAT models, where individual heads can learn to focus on different representation subspaces of the node features~\cite{vaswani2017attention}.

\textbf{In the future}, we will evaluate our framework in different map types, including maze and warehouse-like maps.
We are interested in upgrading it from the discrete world to a continuous one.
Experiments with real physical robots are already on our agenda.

%
\IEEEpeerreviewmaketitle

\bibliographystyle{IEEEtran}
\bibliography{ICRA2020}

\ifCLASSOPTIONcaptionsoff
  \newpage
\fi

\begin{figure*}[h]
    \centering
    \begin{subfigure}[t]{0.32\textwidth}
        \centering
        \includegraphics[width=\columnwidth]{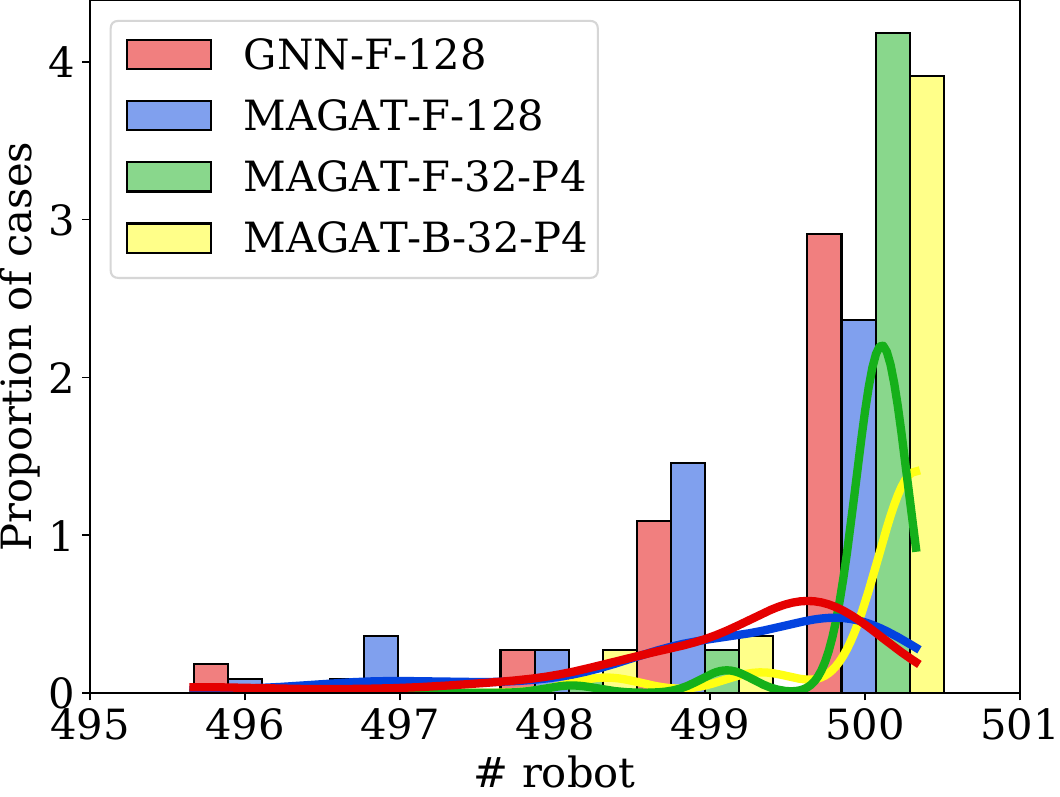}
        \caption{$200\times 200$(500)}
        \label{fig:hist_200map_500}
    \end{subfigure}
    \begin{subfigure}[t]{0.32\textwidth}
        \centering
        \includegraphics[width=\columnwidth]{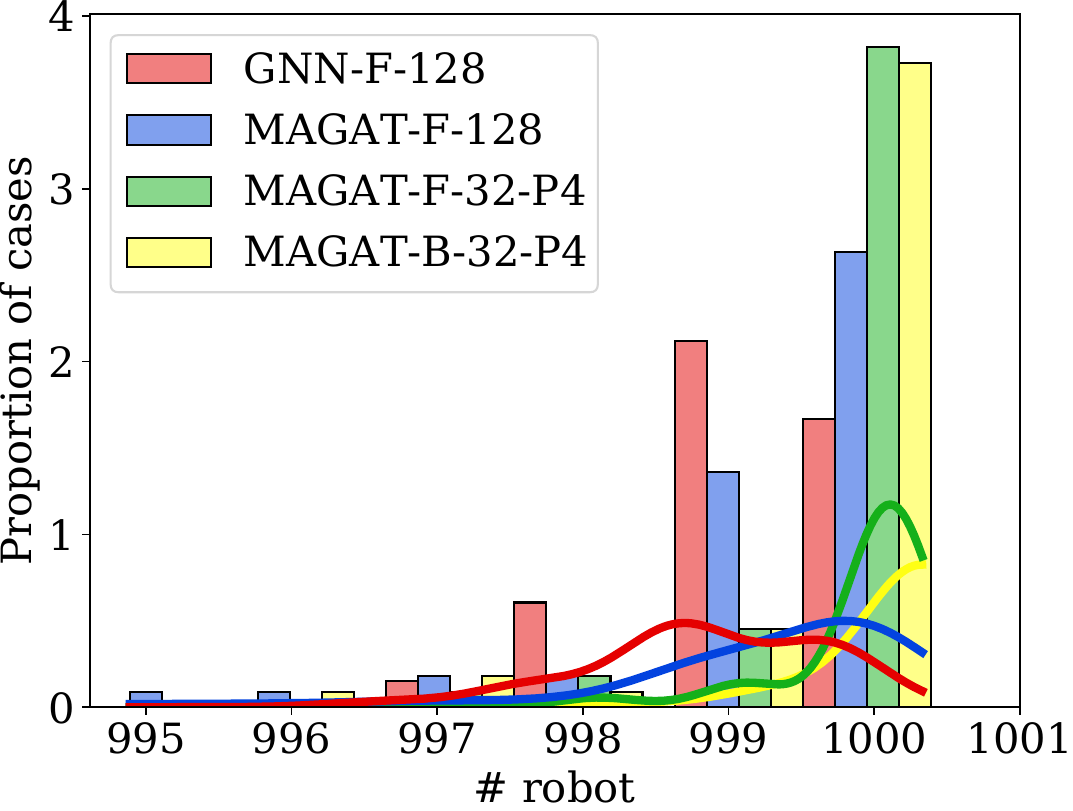}
         \caption{$200\times 200$(1000)}
        \label{fig:hist_200map_1000}
    \end{subfigure}
    \begin{subfigure}[t]{0.32\textwidth}
        \centering
        \includegraphics[width=\columnwidth]{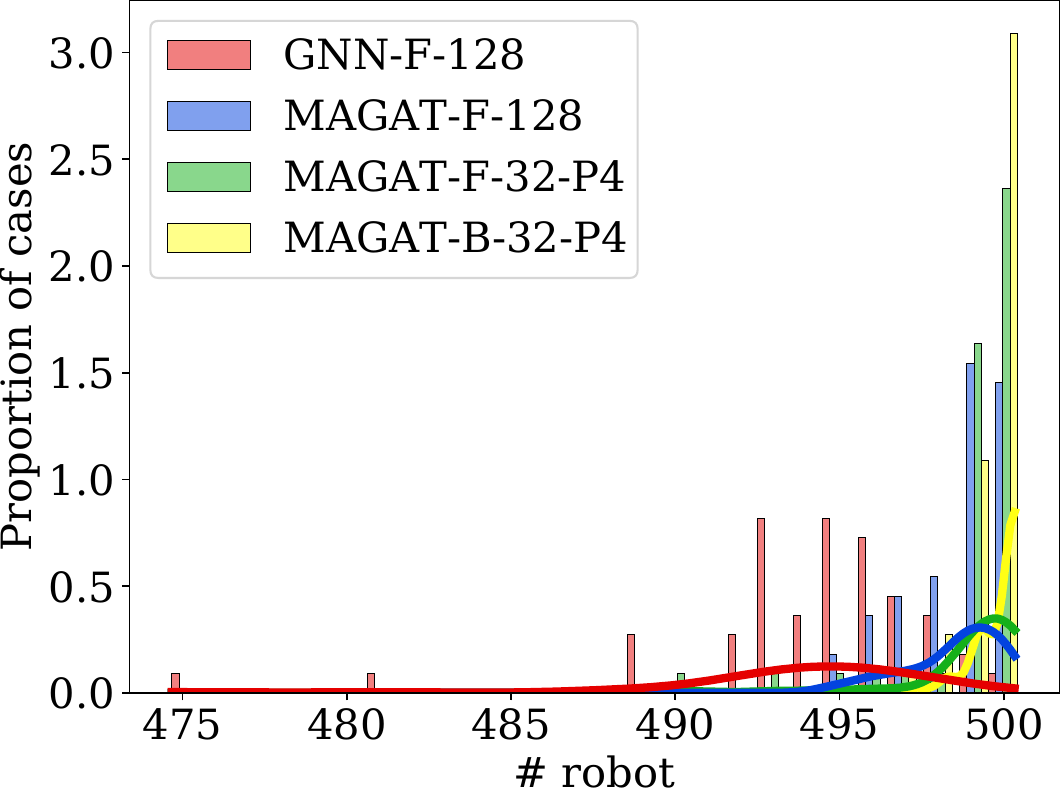}
        \caption{$100\times 100$(500)}
        \label{fig:hist_100map_500}
    \end{subfigure}

    \caption{{\normalfont Histogram of proportion of cases distributed over the number of robots reaching their goal; \texttt{GNN-F-128}(red), \texttt{MAGAT-F-128}(blue), \texttt{MAGAT-F-32-P4}(green) and \texttt{MAGAT-B-32-P4}(yellow), are trained on $20\times 20$ with 10 robots and tested on $200\times 200$ with 500, 1000 robots, and $100\times 100$ with 500 robots.}}
    \label{fig:results_generalization_appendix}
\end{figure*}

\begin{figure*}[tb]
    \centering
    \begin{subfigure}[t]{0.48\columnwidth}
        \centering
        \includegraphics[width=\columnwidth]{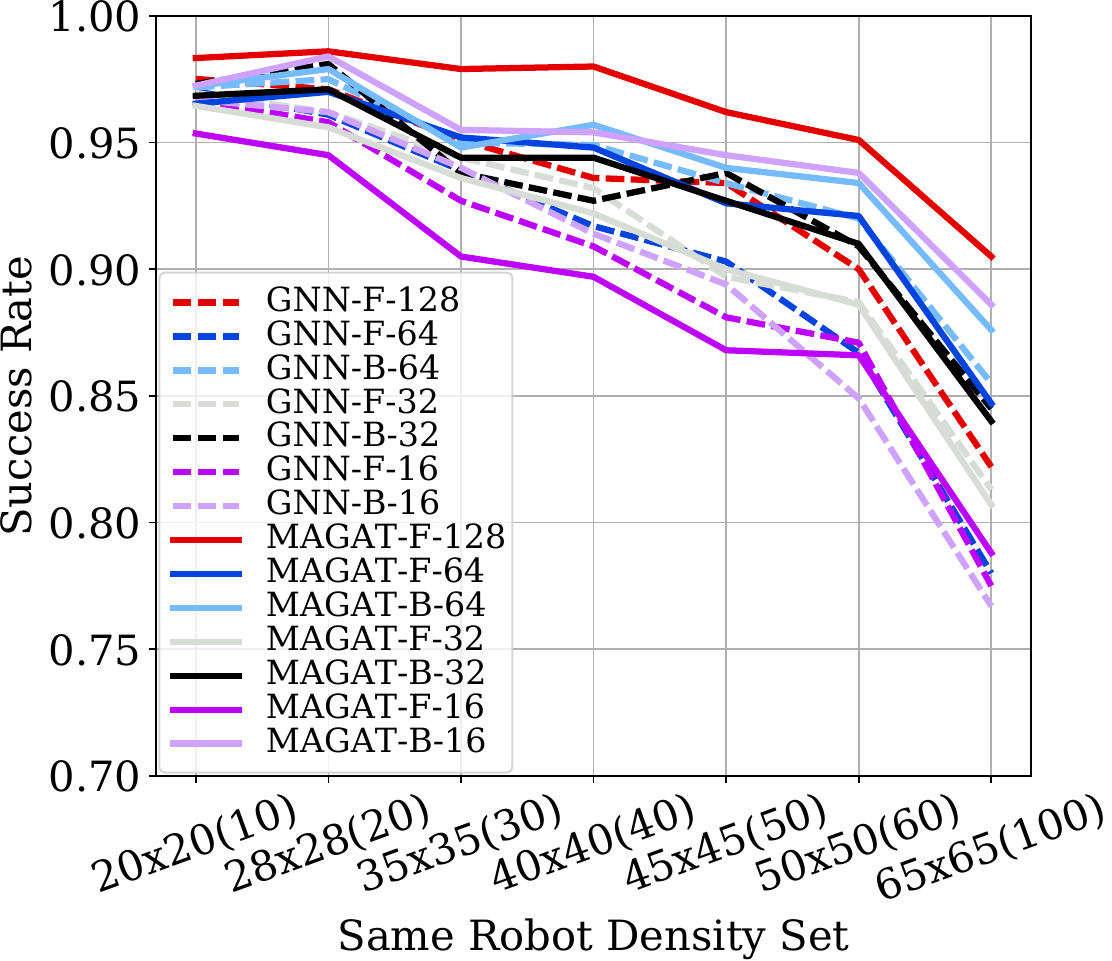}
        \caption{Success rate $\alpha$ }
        \label{fig:rate_ReachGoal_Same_Effective_Density_GNN_GAT}
    \end{subfigure}
    \begin{subfigure}[t]{0.48\columnwidth}
        \centering
        \includegraphics[width=\columnwidth]{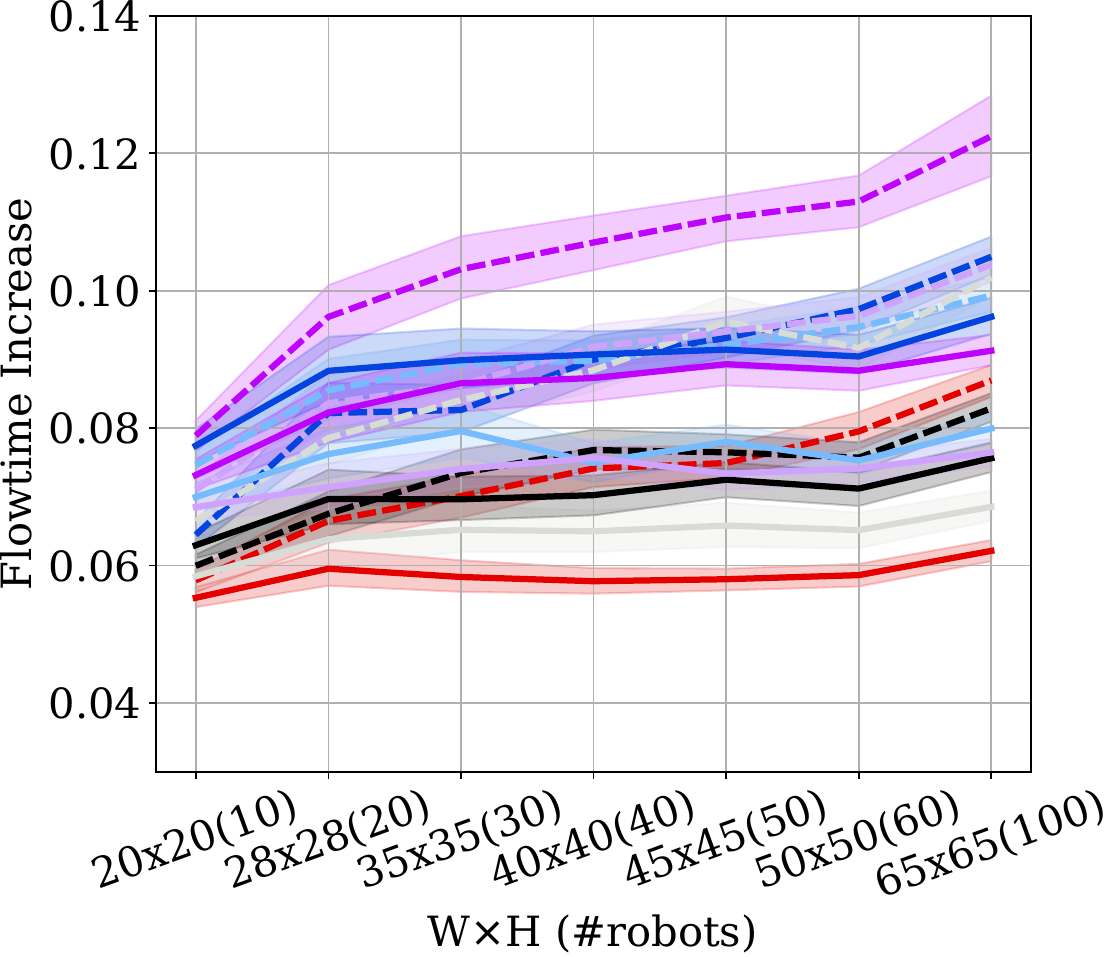}
        \caption{Flowtime increase $\delta_{\mathrm{FT}}$}
        \label{fig:mean_deltaFT_Same_Effective_Density_GNN_GAT}
    \end{subfigure}
    \begin{subfigure}[t]{0.48\columnwidth}
        \centering
        \includegraphics[width=\columnwidth]{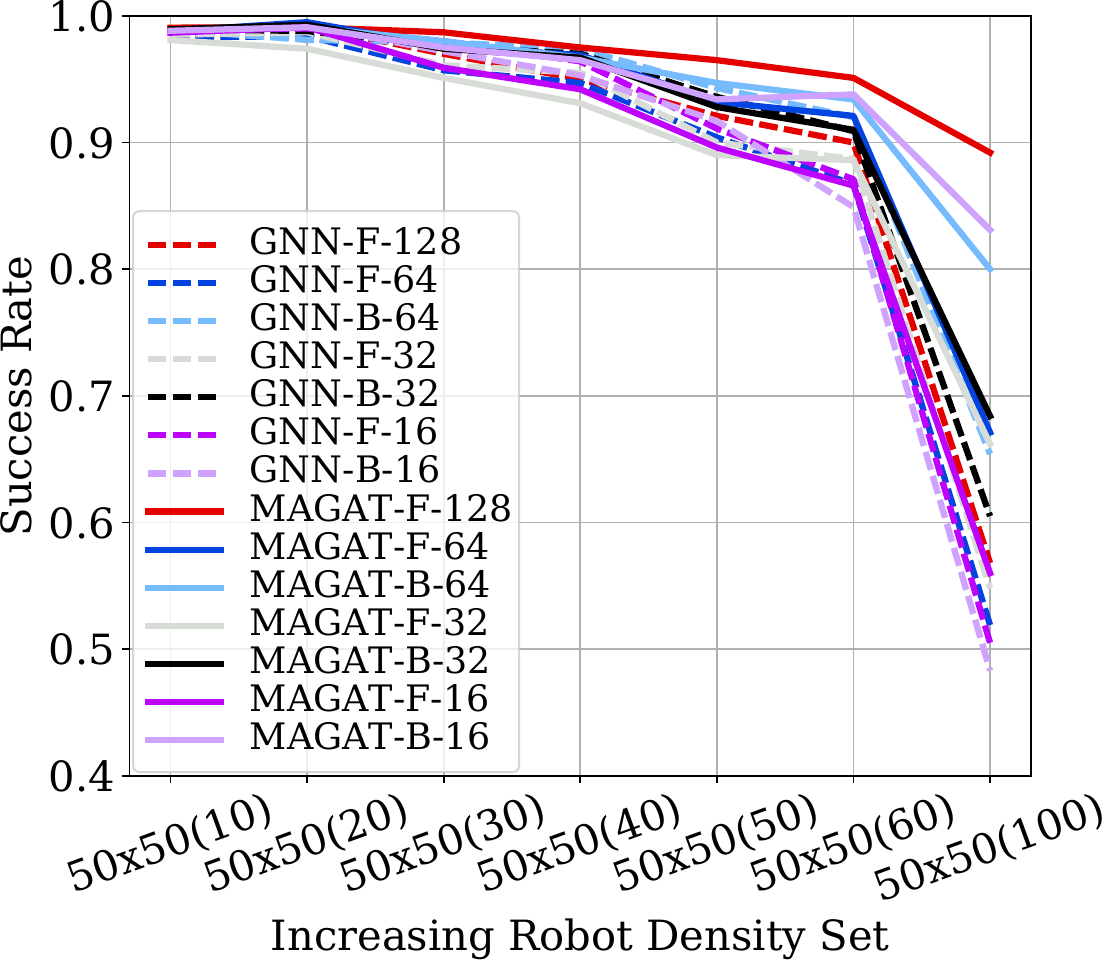}
        \caption{Success rate $\alpha$ }
        \label{fig:rate_ReachGoal_Increase_Density_GNN_GAT}
    \end{subfigure}
    \begin{subfigure}[t]{0.48\columnwidth}
        \centering
        \includegraphics[width=\columnwidth]{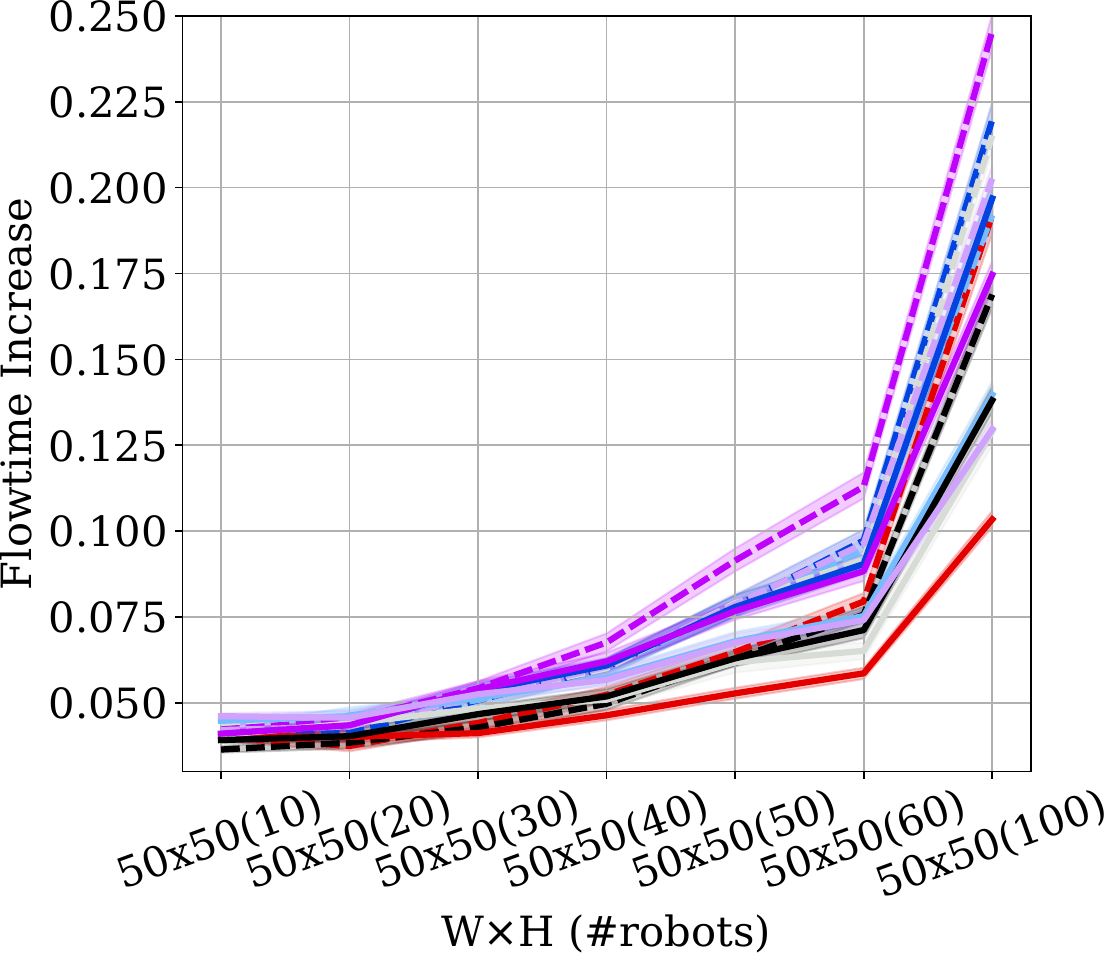}
        \caption{Flowtime increase $\delta_{\mathrm{FT}}$ }
        \label{fig:mean_deltaFT_Increase_Density_GNN_GAT}
    \end{subfigure}
    \vspace{-0.2cm}
    \caption{{\normalfont The success rate ($\alpha$) and flowtime increase ($\delta_{\mathrm{FT}}$) against the change of environment setup. Here we present some selected results of \textbf{GNN} and  \textbf{MAGAT} in both “Same  Robot  Density  Set” (Panel a and b) and “Increasing  Robot  Density  Set” (Panel c and d). The results demonstrated that MAGAT is able to generalize better to more extreme unseen cases. In the legend (\texttt{[Graph\_Layer\_Name]-[Type]-[Num\_Features]}), 
 \texttt{Graph\_Layeer\_Name} are \texttt{GNN} (dashed line) or \texttt{MAGAT} (solid line), while \texttt{Type} - ``\texttt{F}'' refer to normal CNN-MLP-GNN/MAGAT-MLP-Action pipeline, and ``\texttt{B}'' refer to a bottleneck structure. \texttt{[Num\_Features]} includes 128 (red), 64 (blue), 32 (black), 16 (purple).}}
    \label{fig:results_generalization_GNN_GAT}
     \vspace{-1.0em}
\end{figure*}

\end{document}